\documentclass{article}

\usepackage{hyperref}
\usepackage{url}
\usepackage{listings}
\usepackage{pifont}
\usepackage{tikz}
\usepackage{subfig}
\usepackage{multirow}
\usepackage{pgfplots}
\usepackage{booktabs}
\usepackage{cleveref}
\usepackage{dcolumn}
\usepackage{cancel}
\usepackage{ifmtarg}
\usepackage{verbatim}
\usepackage{graphicx}
\usepackage{chngpage}
\usepackage{xspace}
\usepackage{algorithm}
\usepackage{algpseudocode}
\usepackage{balance}
\usepackage{amsmath}
\usepackage{amssymb}
\usepackage{amsthm}

\usepackage{microtype}
\usepackage{booktabs} % for professional tables
%\usepackage{icml2018}
% If accepted, instead use the following line for the camera-ready submission:
\usepackage[accepted]{icml2018}
\icmltitlerunning{Exploring Hidden Dimensions in Parallelizing Convolutional Neural Networks}

\newtheorem{theorem}{Theorem}
\DeclareMathOperator*{\argmin}{arg\,min}

\newcommand{\alexnet}{AlexNet\xspace}
\newcommand{\vgg}{VGG-16\xspace}
\newcommand{\inception}{Inception-v3\xspace}

\newcommand{\tensorflow}{TensorFlow\xspace}
\newcommand{\pytorch}{PyTorch\xspace}
\newcommand{\caffe}{Caffe2\xspace}
\newcommand{\Sys}{DeePa\xspace}

\newcommand{\er}[1]{\mbox{\rm\em #1}}
\newif\ifdeadcode

\begin{document}

\twocolumn[
\icmltitle{Exploring Hidden Dimensions in Parallelizing Convolutional Neural Networks}

\begin{icmlauthorlist}
\icmlauthor{Zhihao Jia}{S}
\icmlauthor{Sina Lin}{M}
\icmlauthor{Charles R. Qi}{S}
\icmlauthor{Alex Aiken}{S}
\end{icmlauthorlist}

\icmlaffiliation{S}{Stanford University}
\icmlaffiliation{M}{Microsoft}

\icmlcorrespondingauthor{Zhihao Jia}{zhihao@cs.stanford.edu}

\icmlkeywords{Parallel Convolutional Neural Networks}
\vskip 0.3in
]

\printAffiliationsAndNotice{}

\setcounter{footnote}{1}

\begin{abstract}

% Version 4
%The past few years have witnessed growth in the size and computational requirements for training deep convolutional neural networks. 
%Current approaches parallelize the training process onto multiple devices by applying a single parallelization strategy (e.g., data or model parallelism) to all layers in a network. 
%However, different layers in a network may prefer different parallelization strategies, which can be exploited to optimize training efficiency. Observing this, we introduce
%a new parallelism approach called layer-wise parallelism that allows each layer to use an individual parallelism strategy. We jointly optimize how each layer is parallelizes by solving a graph optimization problem.
%Our experiments show that layer-wise parallelism can significantly outperform state-of-the-art parallelism approaches with speedups in training, reduction in communication cost and better scalability to multiple GPUs.

% Version 3
The past few years have witnessed growth in the computational requirements for training deep convolutional neural networks. 
Current approaches parallelize training onto multiple devices by applying a single parallelization strategy (e.g., data or model parallelism) to all layers in a network. 
Although easy to reason about, these approaches result in suboptimal runtime performance in large-scale distributed training, since different layers in a network may prefer different parallelization strategies.
In this paper, we propose {\em layer-wise parallelism} that allows each layer in a network to use an individual parallelization strategy.
We jointly optimize how each layer is parallelized by solving a graph search problem. 
Our evaluation shows that layer-wise parallelism
outperforms state-of-the-art approaches by increasing training throughput, reducing communication costs, achieving better scalability to multiple GPUs,
while maintaining original network accuracy.

% Version 2
%Currently, a common approach to parallel the training process onto multiple devices is to use
%image parallelism. Although easy to reason about, image parallelism may result in suboptimal performance in large scale training.
%In this paper, we show that image parallelism is only one possible approach among many others to parallel convolutional neural networks. 
%In addition, we propose a model that reasons about all possible parallelism configurations in the search space and
%present a dynamic programming algorithm for finding the optimal parallelism configuration. Finally, our evaluation shows that the optimal
%parallelism configuration can achieve significant performance improvement over image parallelism.

% Version 1
%\Sys is a deep learning framework that explores parallelism in all parallelizable dimensions
 %to accelerate the training process of convolutional neural networks. \Sys optimizes parallelism at the granularity of each individual layer in the network.
%We present an elimination-based algorithm that finds an optimal parallelism configuration for every layer.
%Our evaluation shows that \Sys achieves up to 6.5$\times$ speedup compared to state-of-the-art
%deep learning frameworks and reduces data transfers by up to 23$\times$.
\end{abstract}

\section{Introduction}
\label{sec:intro}

Convolutional neural networks (CNNs) have proven to be general and effective across many tasks including image classification~\cite{AlexNet, Inception}, face recognition~\cite{face}, 
text classification~\cite{text}, and game playing~\cite{game}. 
Their success has resulted in growth in the computational requirements to train today's CNNs, which takes days or even weeks on modern processors~\cite{VUCN, VGG, Inception}.
%It takes days or even weeks to train a deep CNN from scratch~\cite{GoogleNet, VUCN, VGG, Inception}.

Previous work has investigated parallelization techniques to accelerate training.
The most common approach is {\em data parallelism}~\cite{AlexNet, VGG}
%\footnote{Some papers use the term {\em data parallelism} to refer to parallelism across images. Since this paper involves parallelizing the training dataset in other data dimensions, we use {\em image parallelism} to distinguish this from other parallelization strategies.}, 
that keeps a replica of an entire network on each device and assigns a subset of the training data to each device.
%that divides the entire training dataset into same-sized batches and assigns each batch to a single device.
Another common approach is {\em model parallelism}~\cite{DevicePlace, SplitNet} that divides the network parameters into disjoint subsets and trains each subset on a dedicated device.
Both approaches apply a single parallelization strategy (i.e., data or model parallelism) to all layers in a CNN. 
However, within a CNN, different layers may prefer different parallelization strategies for achieving optimal runtime performance.
For example, a densely-connected layer with millions of parameters prefers model parallelism to reduce communication cost for synchronizing parameters,
while a convolutional layer typically prefers data parallelism to eliminate data transfers from the previous layer.
In addition, some layers may prefer more sophisticated parallelization strategies such as parallelizing in a mixture of multiple data dimensions (see Section~\ref{sec:hidden}). 
Because of the differing characteristics of different layers in a network, applying a single parallelization strategy to all layers usually results in suboptimal runtime performance.

%In both parallelism approaches, the {\em same} parallelization strategy is applied to all layers in a CNN.
%In particular, data parallelism duplicates the parameters in each layer to all training devices, while model parallelism assign each layer's parameters to a dedicated device, which
%then processes the entire training dataset.
%However, within a CNN, different layers may prefer different parallelization strategies for achieving optimal performance, and therefore both parallelism approaches may result in
%suboptimal performance.

In this paper, we propose {\em layer-wise parallelism}, which enables each layer in a network to use an individual parallelization strategy.
Layer-wise parallelism performs the same computation for each layer as it is defined in the original network and therefore maintains the same network accuracy by design.
%For each layer, the selected parallelization strategy processes the same computation as defined in the network and therefore maintains the network accuracy.
Compared to existing parallelization approaches, our approach defines a more comprehensive search space of parallelization strategies, which includes data and model parallelism as two special cases.
Our goal is to find the parallelization strategies for individual layers to jointly achieve the best possible runtime performance while maintaining the original network accuracy.
%For each layer, the selected parallelization strategy always process the same computation as current parallelization approaches. Therefore layer-wise parallelism maintains the same
%network accuracy as existing approaches.
%, while training the exact same network and thus retaining the network accuracy.
To formalize the problem, we introduce {\em parallelization configurations} that define the search space for parallelizing a layer across multiple devices.
We propose a cost model that quantitively evaluates the runtime performance of different parallelization strategies. 
The cost model considers both the computation power of each device and the communication bandwidth between devices.
With the cost model, we convert the original problem of choosing parallelization configurations for individual layers to a graph search problem and 
develop an efficient algorithm to find a globally optimal strategy under the cost model.

We evaluate the runtime performance of layer-wise parallelism with \alexnet~\cite{AlexNet}, \vgg~\cite{VGG}, and \inception~\cite{Inception} on the ILSVRC 2012 image classification dataset.
For distributed training on 16 P100 GPUs (on 4 nodes), layer-wise parallelism is 1.4-2.2$\times$ faster than state-of-the-art parallelization strategies.
Note that the speedup is achieved without sacrificing network accuracy, since layer-wise parallelism trains the same network as data and model parallelism and uses more efficient parallelization strategies to achieve better runtime performance.
In addition, layer-wise parallelism reduces communication costs by 1.3-23.0$\times$ compared to data and model parallelism. 
Finally, we show that layer-wise parallelism achieves better scalability than other parallelization strategies. Scaling the training of \inception from 1 GPU to 16 GPUs, layer-wise parallelism obtains 15.5$\times$ speedup, while other parallelization strategies achieve at most 11.2$\times$ speedup.

To summarize, our contributions are:
\begin{itemize}
%\vspace{-3mm}
\item We propose layer-wise parallelism, which allows different layers in a network to use individual parallelization configurations. 
%\vspace{-1mm}
\item We define the search space of possible parallelization configurations for a layer and present a cost model to quantitively evaluate the runtime performance of training a network.
Based on the cost model, we develop an efficient algorithm to jointly find a globally optimal parallelization strategy.
%\vspace{-1mm}
\item We provide an implementation that supports layer-wise parallelism and show that layer-wise parallelism can increase training throughput by 1.4-2.2$\times$ and reduce communication costs by 1.3-23.0$\times$ over state-of-the-art approaches while improving scalability.
\end{itemize}

\ifdeadcode
Existing deep learning frameworks such as \tensorflow, \pytorch, and \caffe parallelize the training process onto multiple high-end
devices (usually GPUs) using {\em image parallelism}\footnote{Some papers use the term {\em data parallelism} to refer to parallelism across images. Since this paper involves parallelizing the training dataset in other data dimensions, we use {\em image parallelism} to distinguish this from other parallelization strategies.}
dividing the entire image dataset into batches with the same number of images and assigning each batch to a dedicated device. 

The standard parallelization of CNN training only exploits image parallelism.
However, other dimensions can also parallelize the training process. 
For example, in CNNs for 2D images, data is commonly organized as 4-dimensional 
tensors (i.e., image, height, width, channel). The {\em image} dimension includes an index for each image in the input dataset.
The {\em height} and {\em width} dimensions specify a position in an image. For a particular position, the {\em channel}
dimension\footnote{Some papers use the term {\em depth} to refer to different neurons for a position. In this paper, depth refers to the number of layers for an entire neural network and we use {\em channel} for the neurons for a position.} indexes different neurons for that position.
Exploring these other parallelizable dimensions can potentially reduce the compute time and data transfer cost when training CNNs (see Section~\ref{sec:motivation}).
Moreover, different layers in a CNN prefers different parallelism configurations for achieving optimal performance. 
%Therefore, a fine-grained parallelism approach is required to optimize the training performance for each individual layer.

We propose \Sys, a deep learning framework that explores parallelism in all parallelizable dimensions to accelerate the 
training of CNNs. %\Sys can exploit different parallelism configurations for each individual layer.
To the best of our knowledge, \Sys is the first system that models and exploits the parallelism of neural networks at the granularity of each individual layer.
To generate a parallelism configuration for each layer, \Sys
uses an elimination-based algorithm that automatically finds the configuration with the best estimated performance.

The main contributions of this paper are:
\begin{itemize}
\item We present \Sys, a deep learning framework that explores parallelism in all parallelizable dimensions to accelerate the 
training of CNNs.
\item The parallelization strategy is selected at the granularity of each individual layer.
\item We present an elimination-based algorithm for finding the parallelism configuration with optimal estimated performance for each layer.
\item Our evaluation shows that, compared to state-of-the-art deep learning frameworks (e.g., \tensorflow and \pytorch), \Sys achieves 6.5$\times$, 1.9$\times$, and
1.5$\times$ speedup for \alexnet, \vgg, and \inception, respectively. The performance improvement comes from reducing overall data transfers, automatically overlapping
computation with data movement, and accelerating computation throughput.
\end{itemize}
\fi
\section{Related Work}
\label{sec:related}

{\bf Data and model parallelism} have been widely used by existing deep learning frameworks (e.g., \tensorflow~\cite{Tensorflow}, \caffe\footnote{https://caffe2.ai}, and \pytorch\footnote{https://pytorch.org}) to parallelize training.
Data parallelism~\cite{AlexNet} keeps a copy of an entire network on each device, which is inefficient for layers with large numbers of network parameters and becomes a scalability bottleneck in large scale distributed training.
%The standard data parallelism strategy keeps a copy of the entire neural network on each device, which is a scalability bottleneck for large scale distributed training. 
Model parallelism~\cite{DistBelief} divides network parameters into disjoint subsets and trains each subset on a dedicated device. 
This reduces communication costs for synchronizing network parameters but exposes limited parallelism.

%\citet{DevicePlace} uses model parallelism that assigns each layer to a dedicated device and uses reinforcement learning to learn the device placement of each layer.
%\citet{SplitNet} proposes a network design that splits the network parameters into multiple disjoint subsets and assigns each subset to one device, which allows different devices to train disjoint subsets of network parameters. 

\citet{OWT} introduces ``one weird trick'' (OWT) that uses data parallelism for convolutional and pooling layers and switches to model parallelism for fully-connected layers to accelerate training.
This achieves better runtime performance than data and model parallelism but is still suboptimal.
In this paper, we use OWT parallelism as a baseline in the experiments and show that
layer-wise parallelism can further reduce communication costs and improve training performance compared to OWT parallelism.

%\ZJ{Not sure if we should remove this paragraph as this is not that related.}
{\bf System optimizations.} A number of system-level optimizations have been proposed to accelerate large scale training.
\citet{LargeSGD} %empirically shows no loss of accuracy for training ResNet-50 on 256 GPUs with a large batch size of 8192 using
uses a three-step allreduce operation to optimize communication across devices and aggressively overlaps gradient synchronization with back propagation.
\citet{Poseidon} introduces a hybrid communication scheme to reduce communication costs for gradient synchronization.
All these systems are based on data parallelism and are limited in runtime performance by communication costs.

{\bf Network parameter reduction.} %Recently, parameter reduction techniques have been proposed to reduce communication in large scale training.
\citet{Han1} presents an iterative weight pruning method that repeatedly retrains the network while removing weak connections.
\citet{Alvarez1} proposes a network that learns the redundant parameters in each layer and iteratively eliminates the redundant parameters.
%~\citet{SplitNet} learns to split the network parameters into disjoint groups and removes connections between different groups. 
These approaches improve runtime performance by significantly reducing the number of parameters in a neural network,
which results in a modified network and may decrease the network accuracy (as reported in these papers). By contrast, in this paper, we introduce a new
approach that accelerates distributed training while maintaining the original network accuracy.

\section{Hidden Dimensions in Parallelizing a Layer}
\label{sec:hidden}
%Model parallelism usually requires each device to process the entire training dataset, while
Data parallelism parallelizes training by partitioning a training dataset in the sample dimension.
However, other dimensions can also be used to parallelize a layer. 
For example, in standard CNNs for 2D images,
data is commonly organized as 4-dimensional tensors (i.e., sample, height, width, and channel). 
The {\em sample} dimension includes an index for each image in a training dataset.
The {\em height} and {\em width} dimensions specify a position in an image.
For a particular position, the {\em channel} dimension
%\footnote{Some papers use the term {\em depth} to refer to different neurons for a position. In this paper, depth refers to the number of layers for an entire neural network and we use {\em channel} for the neurons for a position.} 
indexes different neurons for that position. 

In principle, any combination of these dimensions can be used to parallelize a layer,
and we should consider both selecting the dimensions to parallelize training and the degree of parallelism in each dimension.
Exploring these additional dimensions has the following advantages.

\begin{figure}[t]
\centering
\includegraphics[scale=0.3]{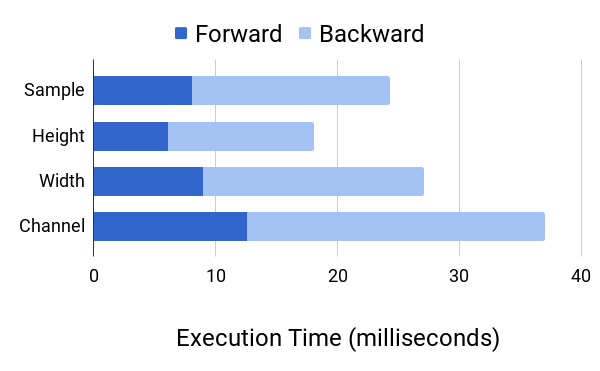}
\vspace{-6mm}
\caption{Execution time for parallelizing a convolutional layer (Conv8 in \vgg~\cite{VGG}) on 4 GPUs by using different dimensions.
Parallelizing a layer in other dimensions preserves the same output as parallelizing it in the sample dimension. To achieve this, different GPUs may share some common
input data for parallelizations in the height, width, and channel dimensions.}
\label{fig:conv_tp}
\vspace{-3mm}
\end{figure}

First, parallelizing a layer in other dimensions can reduce execution time. 
Figure~\ref{fig:conv_tp} shows the time to process a 2D convolutional layer on 4 GPUs using parallelism in different dimensions.
For this layer, data parallelism achieves suboptimal performance.
%The figure shows that data parallelism is not always the best choice.
%The figure shows that parallelizing a layer in different dimensions results in different execution times, and using other dimensions can potentially reduce the layer's execution time.

%First, parallelizing a layer in different dimensions can result in significantly different execution times.
%Figure~\ref{fig:conv_tp} illustrates the execution times of processing a 2D convolutional layer on 4 GPUs using parallelism in different dimensions.
%The figure shows that using parallelism in the image dimension achieves suboptimal performance in this case, and therefore exploring other dimensions
%can potentially reduce the execution time of this layer.

\begin{figure}[t]
\begin{center}
\subfloat[Parallelism in the sample \newline dimension.] {
\label{fig:data_transfer_dp}
\includegraphics[scale=0.17]{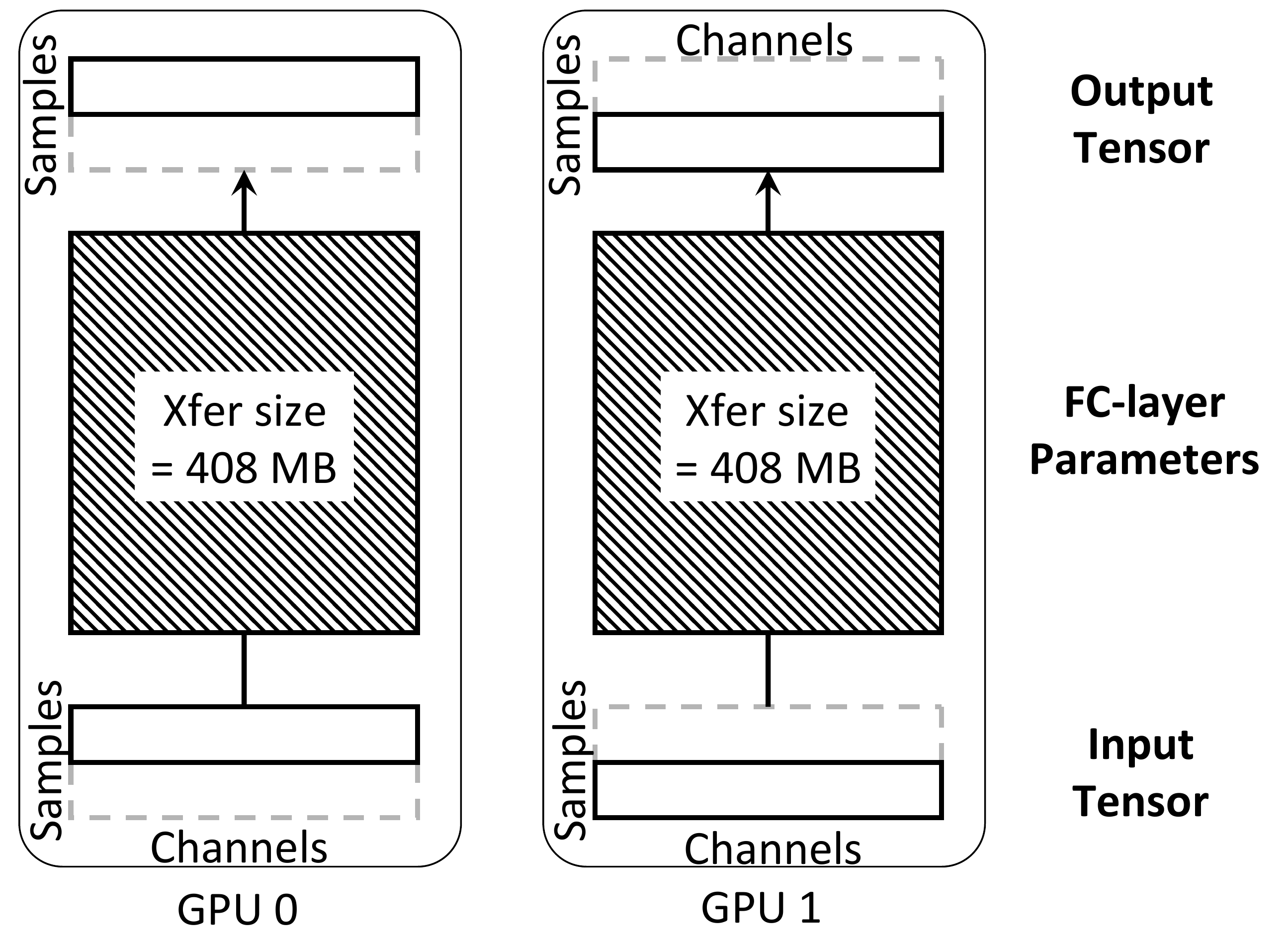}
}
\subfloat[Parallelism in the channel \newline dimension.] {
\label{fig:data_transfer_mp}
\includegraphics[scale=0.17]{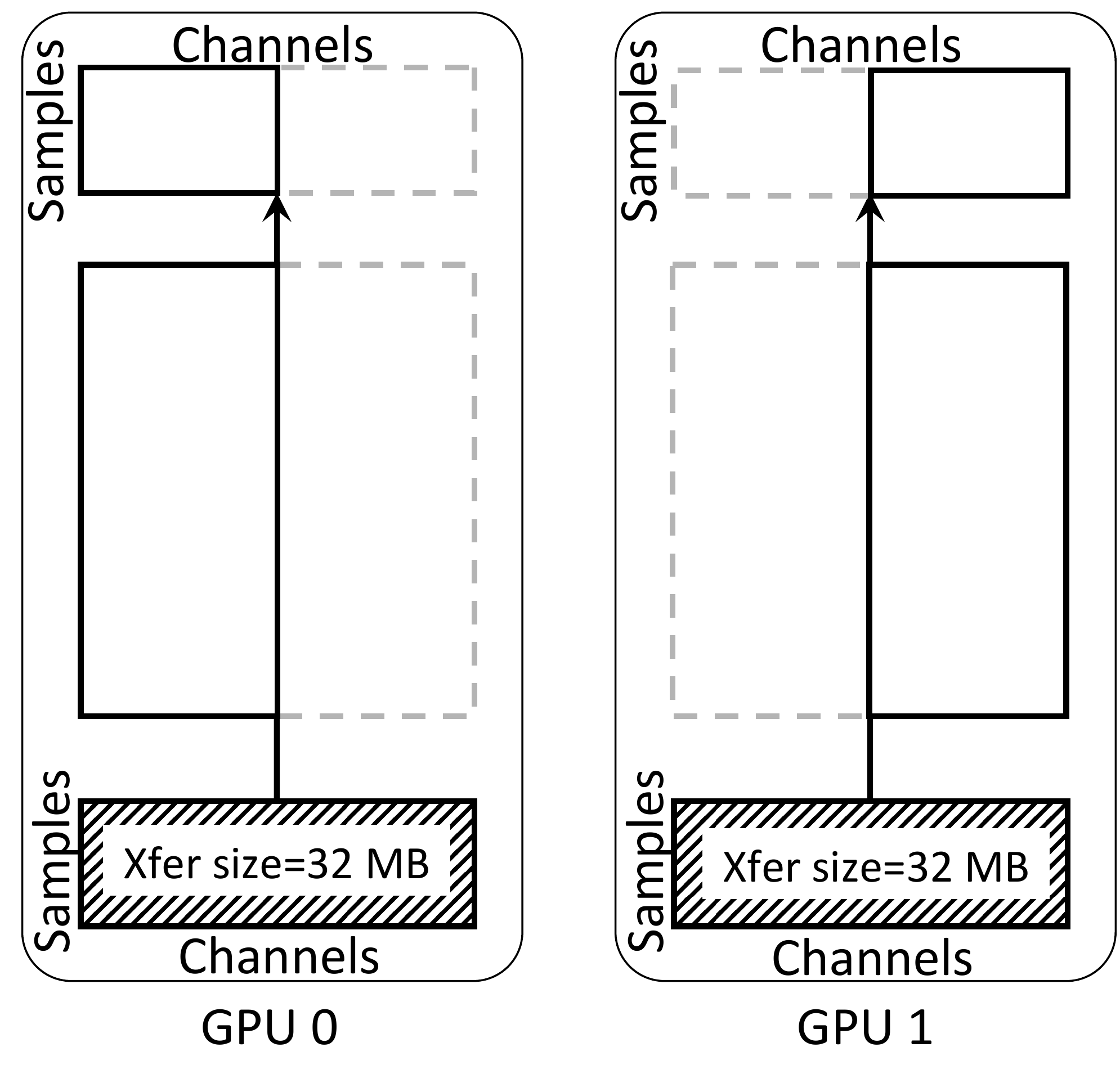}
}
\end{center}
\vspace{-3mm}
\caption{Different ways to parallelize the first fully-connected layer of \vgg. Rectangles with solid lines indicate tensors managed by the
local GPU, while rectangles with dotted lines are tensors managed by a remote GPU. The shadow rectangles indicate data transfers in each step.}
\label{fig:data_transfer}
\vspace{-3mm}
\end{figure}

Second, exploring parallelism in other dimensions can reduce communication costs.
%Parallelizing a network in different dimensions also involves different communication cost.
Figure~\ref{fig:data_transfer} shows an example of parallelizing a fully-connected layer on two GPUs in different dimensions.
In data parallelism (Figure~\ref{fig:data_transfer_dp}), each GPU synchronizes the gradients of the entire fully-connected layer (shown as the shadow rectangles) in every step. 
An alternative approach (Figure~\ref{fig:data_transfer_mp}) parallelizes in the channel dimension, which eliminates parameter synchronization, as different GPUs train disjoint subsets of the parameters, but introduces
additional data transfers for input tensors (shown as the shadow rectangles).
For this particular case, using parallelism in the channel dimension reduces communication costs by 12$\times$.

\begin{figure}[t]
\vspace{-3mm}
\centering
\includegraphics[scale=0.28]{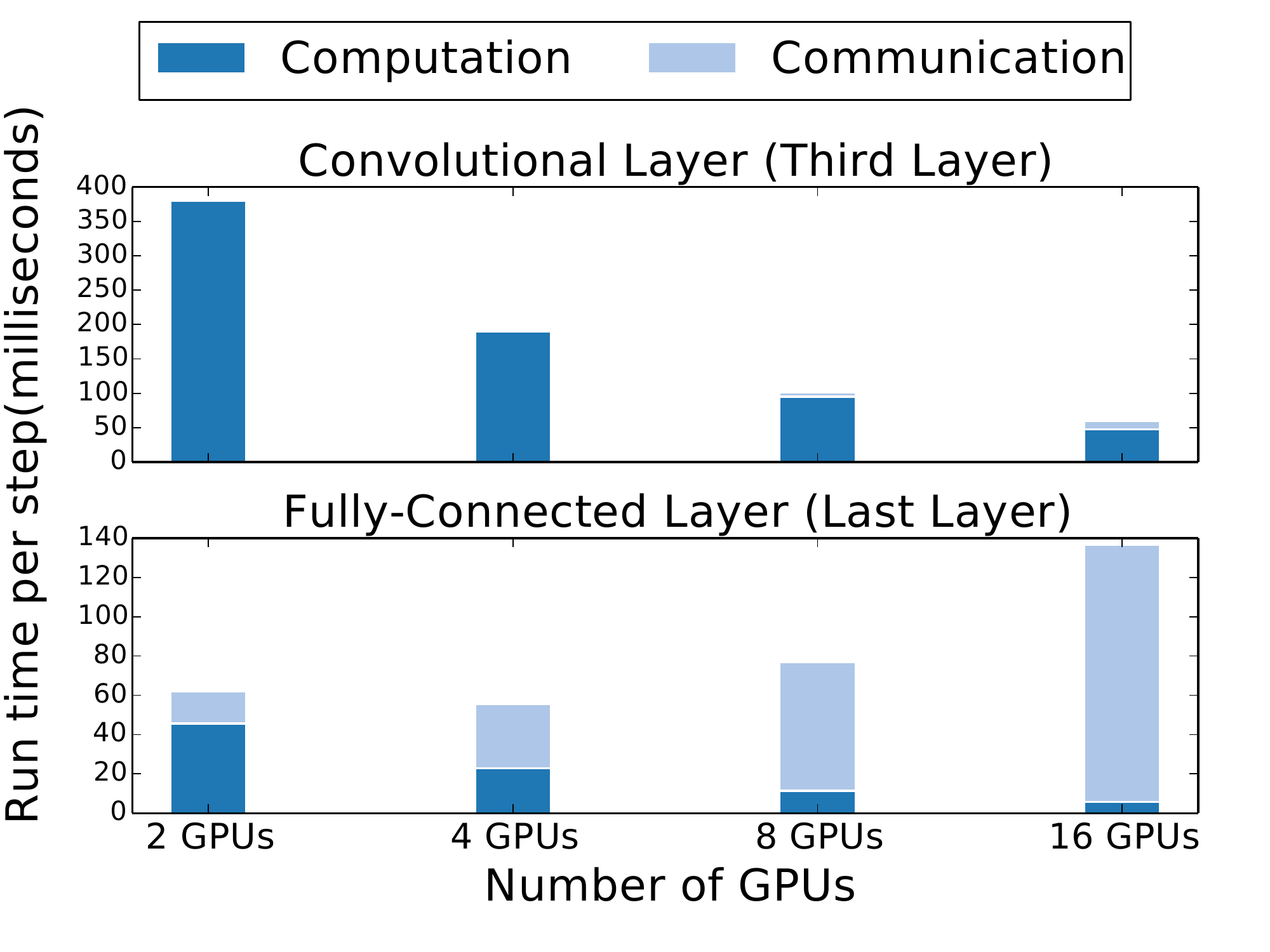}
\vspace{-6mm}
\caption{Computation and communication time to process the third layer and the last layer of \inception using data parallelism.}
\vspace{-3mm}
\label{fig:balance}
\end{figure}

Third, the {\em degree} of parallelism (number of parallel devices) is another dimension that affects runtime performance.
Different layers have different execution time and communication costs and may prefer different degrees of parallelism.
Figure~\ref{fig:balance} shows the runtime performance of processing two layers in \inception with different degrees of parallelism.
The convolutional layer performs best on 16 GPUs, while the fully-connected layer performs best on 4 GPUs. 

%In summary, designing a parallelism approach for a neural network includes both selecting the dimensions to be parallelized and the degree of parallelism.

\ifdeadcode

\section{Motivation}
\label{sec:motivation}
This work is motivated by the following observations.
\subsection{Accelerating Computation Throughput}
\begin{figure}[h]
\begin{center}
\begin{minipage}{0.48\textwidth}
\centering
\includegraphics[scale=0.27]{figures/conv_tp.pdf}
\caption{Relative performance for training different convolutional layers. Computation throughput is calculated by dividing the batch size with 
computation time (both forward processing and back propagation) and is normalized by the worst case.}
\label{fig:conv_tp}
\end{minipage}\hfill
\begin{minipage}{0.48\textwidth}
\centering
\includegraphics[scale=0.27]{figures/balance.pdf}
\caption{Computation and data transfer time to process a batch of 512 images using image parallelism for the third layer, an intermediate layer, and the last layer of \inception.}
\label{fig:balance}
\end{minipage}
\end{center}
\vspace{-3mm}
\end{figure}

\begin{table}
\vspace{-5mm}
\caption{Detailed information for the example convolutional layers used in Figure~\ref{fig:conv_tp} and~\ref{fig:balance}.}
\label{tab:conv_parameters}
\centering
\resizebox{\columnwidth}{!}{
\begin{tabular}{|c|c|c|c|c|c|c|c|}
\hline
\multirow{2}{*}{\bf Name} &  {\bf Input} & {\bf Output} & \multirow{ 2}{*}{\bf Height} & \multirow{ 2}{*}{\bf Width} & \multirow{2}{*}{\bf Kernel} & \multirow{2}{*}{\bf Stride} & \multirow{ 2}{*}{\bf Description} \\
 & {\bf Channels} & {\bf Channels} & & & & & \\
\hline
C1 & 128 & 128 & 112 & 112 & 3x3 & 1x1 & Conv4 in \vgg\\
C2 &  512 & 512 & 28 & 28 & 3x3 & 1x1 & Conv8, Conv9, and Conv10 in \vgg\\
C3 & 192 & 64 & 35 & 35 & 1x1 & 1x1& Conv1x1 in an \inception module\\
C4 & 48 & 64 & 35 & 35 & 5x5 & 1x1 & Conv5x5 in an \inception module\\
C5  & 64 & 192 & 27 & 27 & 5x5 & 1x1 & Conv2 in \alexnet\\
C6  & 256 & 256 & 13 & 13 & 3x3 & 1x1 & Conv5 in \alexnet\\
C7  & 32 & 64 &  147 & 147 & 3x3 & 1x1 & Conv3 in \inception\\
C8  & 448 & 384 & 8 & 8 & 3x3 & 1x1 & Conv3x3 in an \inception module\\
\hline
\end{tabular}
}
\vspace{-3mm}
\end{table}
Convolutional layers generally consume the bulk of the training time in CNNs,
and parallelizing training in different data dimensions results in significantly different
performance. Figure~\ref{fig:conv_tp} shows the relative speed of training six different convolutional
layers from \alexnet, \vgg, and \inception. The properties of the convolutional layers are shown in Table~\ref{tab:conv_parameters}.
For each convolutional layer, we tried parallelizing the computation in each individual parallelizable dimension as well as combinations of
different parallelizable dimensions, and we report the performance of the standard parallelization over images along with the worst and best parallelization strategies we discovered.
Figure~\ref{fig:conv_tp} shows that different parallelism
configurations result in very different performance, and image parallelism generally achieves suboptimal performance. Therefore,
exploring parallelism in other dimensions can potentially accelerate the training of convolutional layers.

\subsection{Reducing data transfer cost}
\label{subsec:data_transfer}
\begin{figure}[h]
\vspace{-5mm}
\begin{center}
\subfloat[Parallelism in the image dimension.] {
\label{fig:data_transfer_dp}
\includegraphics[scale=0.22]{figures/data_transfer_dp.pdf}
}
\\
\subfloat[Parallelism in the channel dimension.] {
\label{fig:data_transfer_mp}
\includegraphics[scale=0.22]{figures/data_transfer_mp.pdf}
}
\end{center}
\caption{Different configurations for parallelizing the first fully-connected layer of \vgg. Rectangles with solid lines indicate tensors managed by the
local GPU, while rectangles with dot lines are tensors managed by a remote GPU. The shadow rectangles indicate data transfers for each step.}
\label{fig:data_transfer}
\vspace{-3mm}
\end{figure}

Different parallelization strategies can also result in significantly different amounts of data movement. Figure~\ref{fig:data_transfer} shows an example of parallelizing the first fully-connected
layer of \vgg on two GPUs in different dimensions. In image parallelism (Figure~\ref{fig:data_transfer_dp}), each GPU processes a batch of images and computes the gradient for the entire fully-connected layer.
This requires each GPU to synchronize the gradients for the entire fully-connected layer (shown as the shadow rectangles) after each step. 
An alternative approach (Figure~\ref{fig:data_transfer_mp}) parallelizes in the channel dimension by assigning a subset of the output channels to each GPU. As a result, different GPUs
compute the gradients for disjoint subsets of the fully-connected layer, which eliminates transferring the fully-connected layer but introduces additional data transfers for input tensors (shown as the shadow rectangles).
For this particular case, using parallelism in the channel dimension reduces data transfer costs by 12$\times$.

\subsection{Optimizing per-layer performance}
When processing a batch of images, increasing the number of workers does not always improve overall execution time, due to the data transfer overhead to synchronize gradients across different workers.
Figure~\ref{fig:balance} shows the per-step training time for three different layers in \inception for a batch size of 512 images on up to 16 GPUs. 
The training time includes forward processing, backward propagation, and gradient aggregation.
The figure shows that different layers in a neural network may prefer different hardware configurations,
and there is no single configuration that is optimal for all layers.
For example, the third layer performs best on 16 GPUs while the last layer performs best on 4 GPUs.
Thus, a parallelism configuration includes both selecting the data dimensions to be parallelized and the number of parallel workers (or, equivalently, the number of subsets into which the data is partitioned).
%This indicates that the simple data parallelism approach is unable to optimize the performance for every individual layer,
%and a fine-grained parallelism approach is necessary in order to take into account the performance of individual layers.

\fi

\section{Problem Definition}
\label{sec:problem}
We define the parallelization problem with two graphs. The first is a {\em device graph} that models all
available hardware devices and the connections between them. The second is
a {\em computation graph} that defines the neural network to be mapped onto the device graph. 

In a device graph $\mathcal{D}$, each node $d_i$ is a device (e.g., a CPU or a GPU),
and each edge $(d_i, d_j)$ is an connection between $d_i$ and $d_j$ with communication bandwidth $b(d_i, d_j)$.
In a computation graph $\mathcal{G}$, each node $l_i \in \mathcal{G}$ is a layer in the neural network,
and each edge $(l_i, l_j) \in \mathcal{G}$ is a tensor that is an output of layer $l_i$ and an input of layer $l_j$.

\begin{table}
\vspace{-2mm}
\caption{Parallelizable dimensions for different layers. The {\em length} dimension specifies a position in a 1D image.}
\label{tab:para_dimensions}
\centering
\resizebox{\columnwidth}{!}{
\begin{tabular}{|c|c|}
\hline
{\bf Layer} & {\bf Parallelizable dimensions} \\
\hline
Fully-connected & \{sample, channel\} \\
1D convolution/pooling & \{sample, channel, length\} \\
2D convolution/pooling & \{sample, channel, height, width\} \\
3D convolution/pooling & \{sample, channel, height, width, depth\} \\
\hline
\end{tabular}
}
\vspace{-4mm}
\end{table}
We now define the parallelization of a layer. To parallelize a layer across multiple devices, we assume that different devices can process the layer
in parallel without any dependencies. This requires different devices to compute {\em disjoint} subsets of a layer's output tensor.
Therefore, we describe the parallelization of a layer by defining how its output tensor is partitioned.

For a layer $l_i$, we define its {\em parallelizable dimensions} $\mathcal{P}_i$ as the set of all divisible dimensions in its
output tensor. $\mathcal{P}_i$ includes all dimensions to parallelize the layer $l_i$. 
Table~\ref{tab:para_dimensions} shows the parallelizable dimensions for different layers.

\begin{figure}[t]
\vspace{-2mm}
\begin{center}
\subfloat[($n$=1, $c$=1, $h$=1, $w$=4)] {
\label{fig:config1}
\includegraphics[scale=0.26]{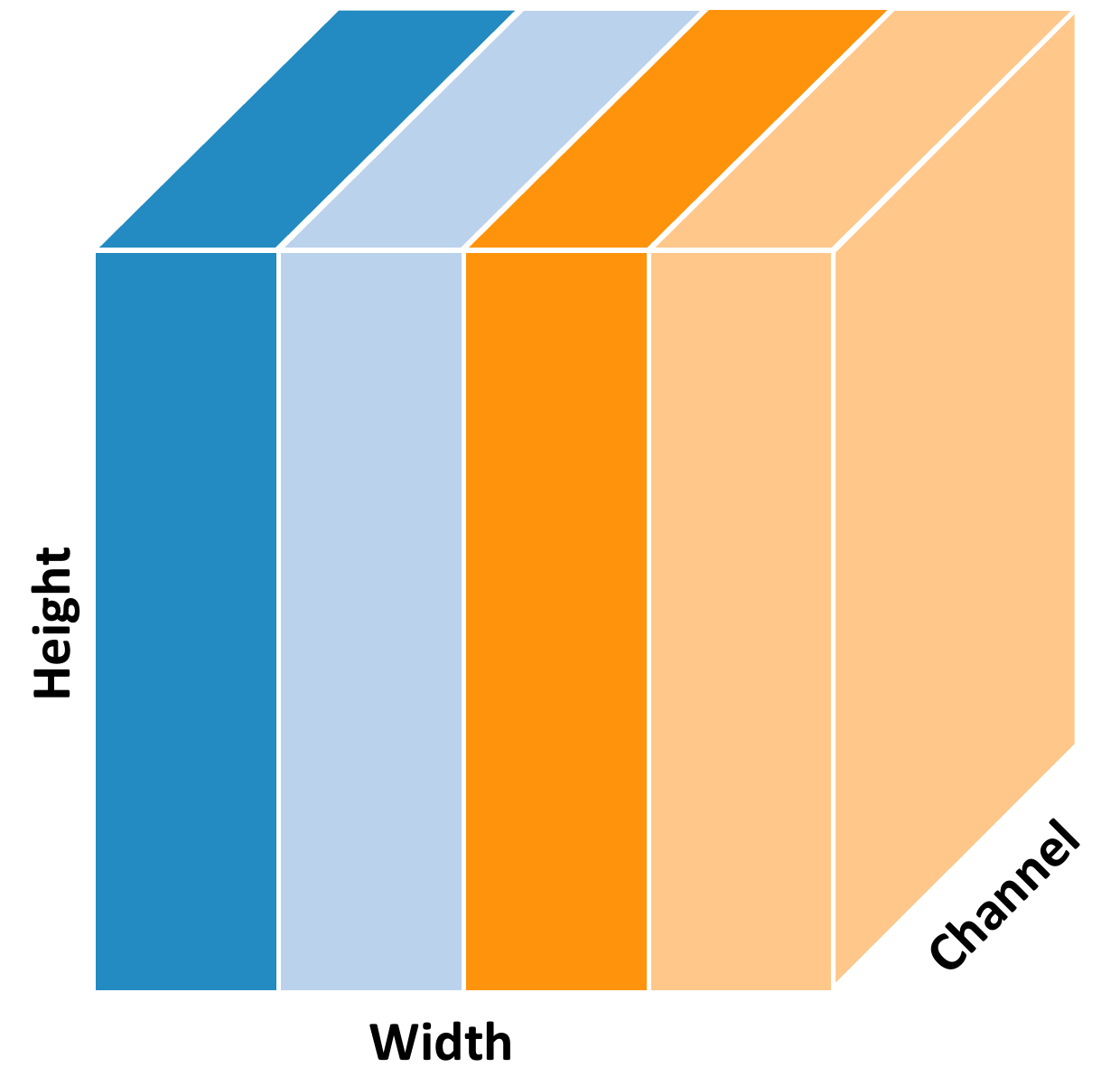}
}
\subfloat[($n$=1, $c$=1, $h$=4, $w$=1)] {
\label{fig:config2}
\includegraphics[scale=0.26]{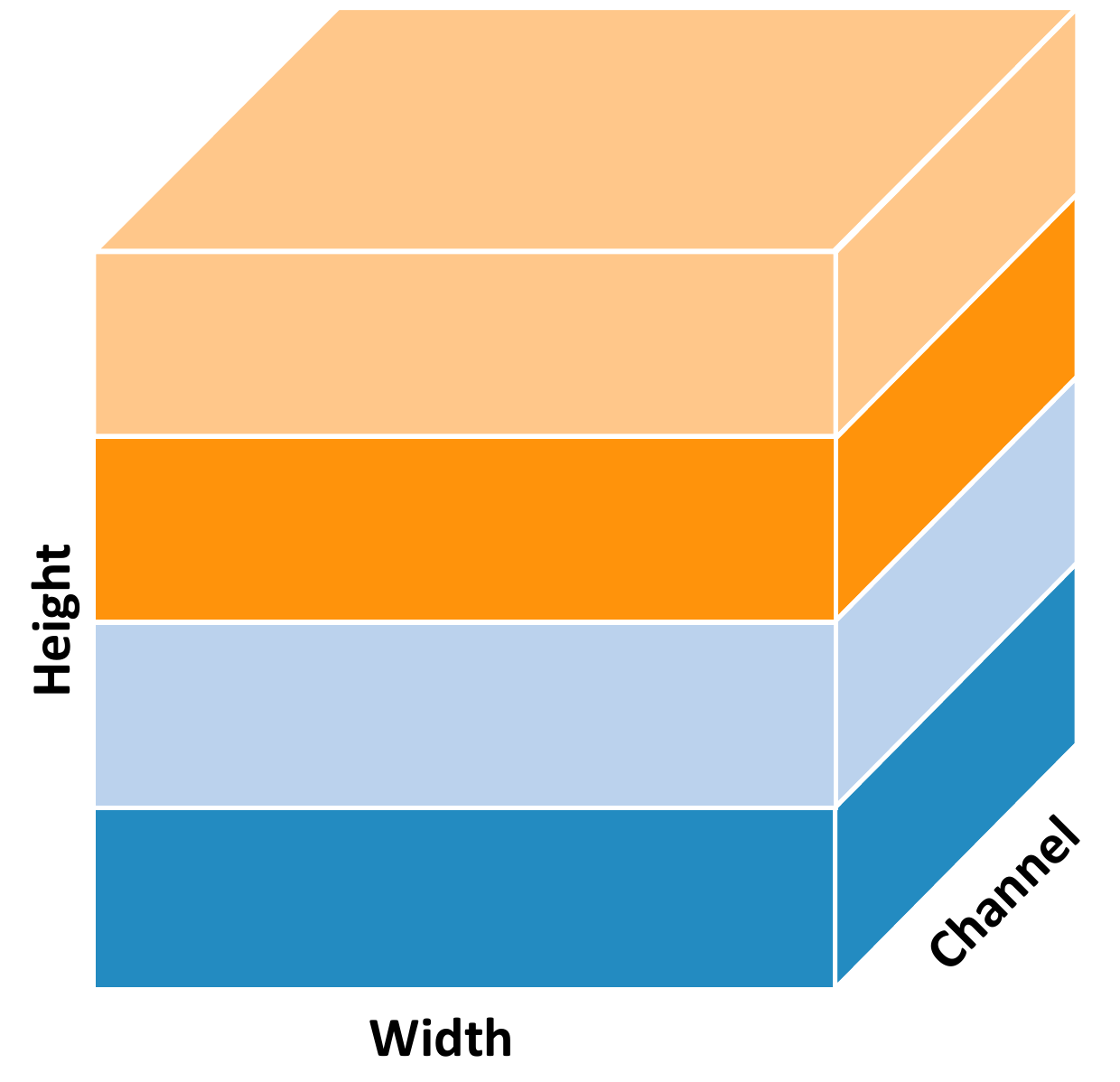}
}
\\
\vspace{-2mm}
\subfloat[($n$=1, $c$=4, $h$=1, $w$=1)] {
\label{fig:config3}
\includegraphics[scale=0.26]{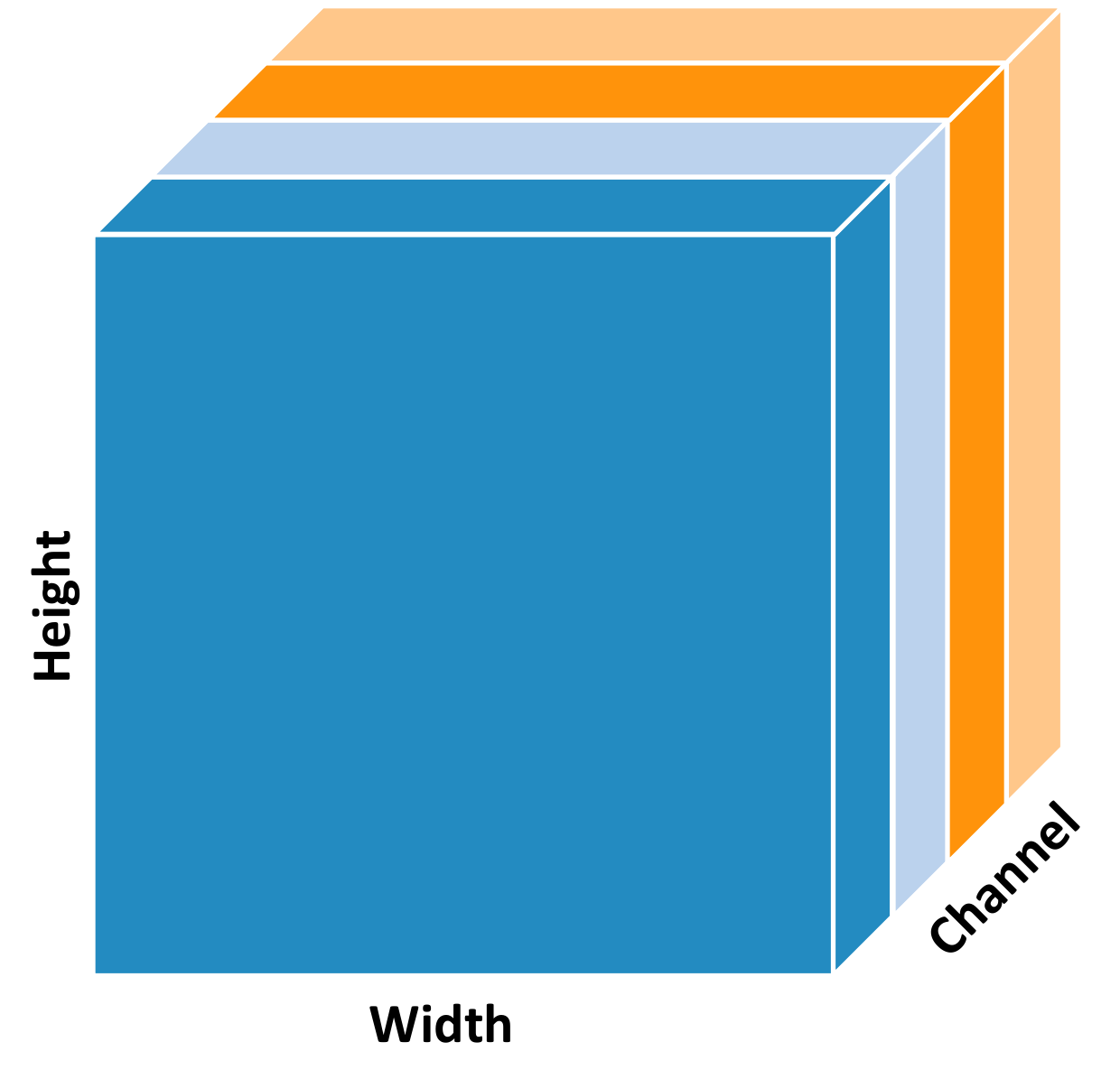}
}
\subfloat[($n$=1, $c$=1, $h$=2, $w$=2)] {
\label{fig:config4}
\includegraphics[scale=0.26]{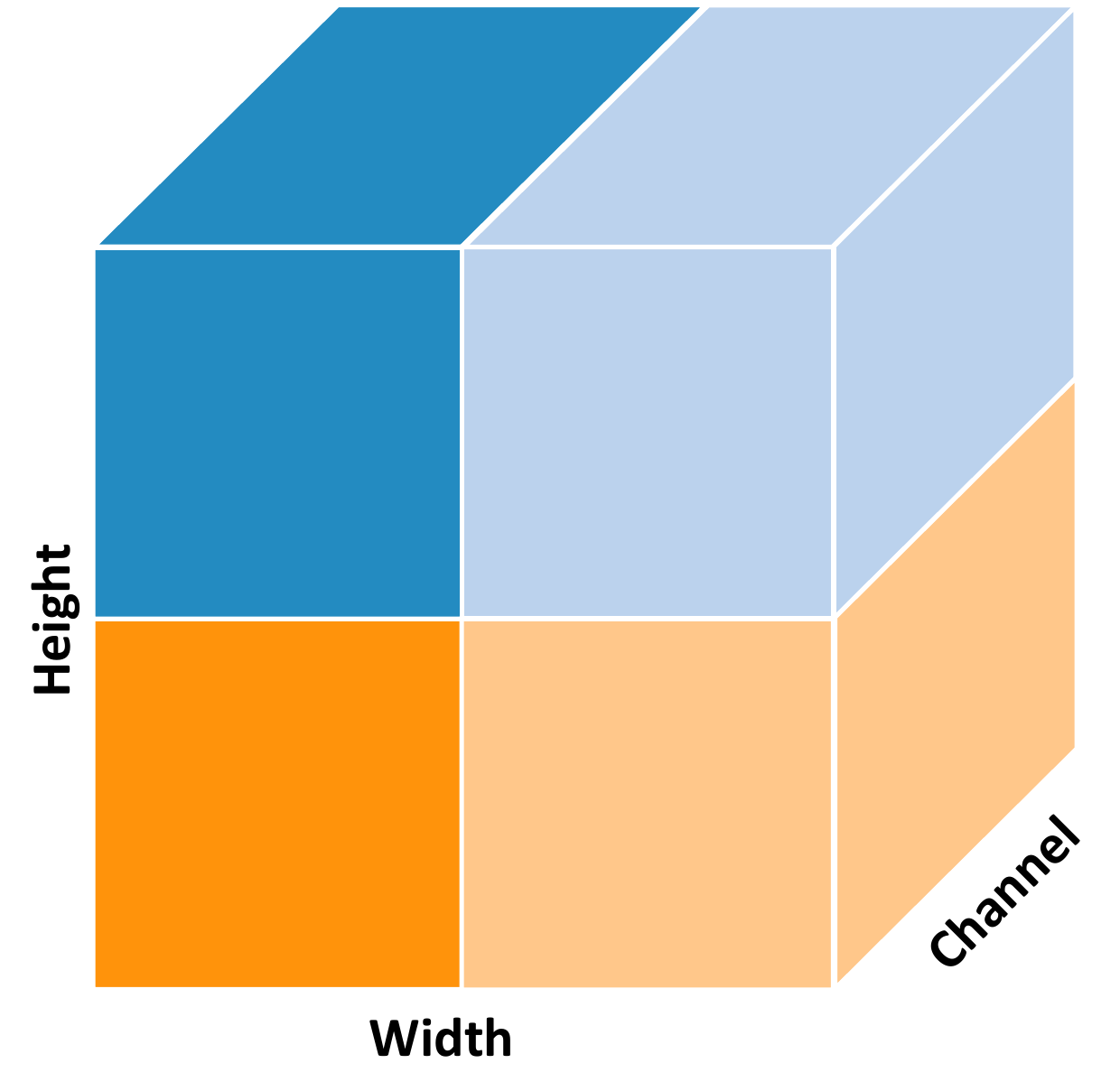}
}
\vspace{-2mm}
\caption{Example configurations to parallelize a 2D convolutional layer in a single dimension or combinations of
multiple dimensions. The figure shows how each training sample is partitioned.}
\label{fig:config_examples}
\end{center}
\vspace{-2mm}
\end{figure}

A {\em parallelization configuration} $c_i$ of a layer $l_i$ defines how $l_i$ is parallelized across different devices.
For each parallelizable dimension in $\mathcal{P}_i$, $c_i$ includes a positive integer that describes the degree of parallelism in that dimension.
For a configuration $c_i$, the product of the integers over all dimensions is the total degree of parallelism for $l_i$.
We assume equal partitioning in each parallelizable dimension, which provides well-balanced workload among multiple devices.
Figure~\ref{fig:config_examples} demonstrates some possible configurations for parallelizing a 2D convolutional layer over four devices.
Parallelizing a layer in any configuration produces the same output. This guarantees that all
configurations parallelize training on the original network and therefore maintains the original network accuracy.

A {\em parallelization strategy} $\mathcal{S}$ includes a configuration $c_i$ for each layer $l_i \in \mathcal{G}$.
Let $t(\mathcal{G}, \mathcal{D}, \mathcal{S})$ denote the per-iteration execution time to parallelize the computation graph $\mathcal{G}$ on the device graph $\mathcal{D}$ by using strategy $\mathcal{S}$.
%The goal of {\em parallelization optimization} problem
Our goal is to find a parallelization strategy $\mathcal{S}$ such that the per-iteration training time $t(\mathcal{G}, \mathcal{D}, \mathcal{S})$ is minimized.
 
\section{Method}
\label{sec:method}

\subsection{Cost Model}
We introduce a cost model to quantitively evaluate the runtime performance of different parallelization strategies 
and use an {\em dynamic programming based} graph search algorithm to find an
optimal parallelization strategy under our cost model. The cost model depends on the following assumptions:
\begin{itemize}
\item[1.] For a layer $l_i \in \mathcal{G}$, the time to process $l_i$ is predictable with low variance and is largely independent of the contents of the input data.
\item[2.] For each connection $(d_i, d_j)$ between device $d_i$ and $d_j$ with bandwidth $b$, transferring a tensor of size $s$ from 
$d_i$ to $d_j$ takes $s / b$ time (i.e., the communication bandwidth can be fully utilized).
\item[3.] The runtime system has negligible overhead. A device begins processing a layer as soon as its input tensors are
available and the device has finished previous tasks.
\end{itemize}

Most layers in CNNs are based on dense matrix operations, whose execution time satisfies the first assumption.
In addition, the experiments show that our implementation satisfies the second and third assumptions well enough to obtain significant runtime performance improvements.

We define three cost functions on computation graphs:
\begin{itemize}
\item[1.] For each layer $l_i$ and its parallelization configuration $c_i$, $t_{\er c}(l_i, c_i)$ is the time to process the layer $l_i$ under configuration $c_i$. 
This includes both the forward and back propagation time and 
is estimated by processing the layer under that configuration multiple times on the device and measuring the average execution time.

\item[2.] For each tensor $e=(l_i, l_j)$, $t_{\er x}(e, c_i, c_j)$ estimates the time to transfer the input tensors to the target devices,
using the size of the data to be moved and the known communication bandwidth.

%Note that $t_{\er x}(e, c_i, c_j)$ is zero if $c_i$ and $c_j$ have the same parallelism configuration (i.e., $c_i = c_j$), in which case no data is transferred. 

\item[3.] For each layer $l_i$ and its parallelization configuration $c_i$, $t_{\er s}(l_i, c_i)$ is the time to synchronize the parameters in layer $l_i$ after back propagation. 
To complete parameter synchronization, each device that holds a copy of the parameters for layer $l_i$ transfers its local gradients to a parameter server that stores the up-to-date
parameters for layer $l_i$. After receiving the gradients for layer $l_i$, the parameter server applies the gradients to the parameters and transfers the updated parameters back to the device. In this process, the communication time is much longer than the execution time to update parameters, therefore we use the communication time to approximate the parameter synchronization time.
\end{itemize}

Using the three cost functions above, we define
\begin{equation}
\begin{split}
\label{eqn1}
t_{\er o}(\mathcal{G}, \mathcal{D}, \mathcal{S}) & = \sum_{l_i \in \mathcal{G}}\{t_{\er c}(l_i, c_i) + t_{\er s}(l_i, c_i)\} \\
& + \sum_{e=(l_i, l_j)\in \mathcal{G}}{t_{\er x}(e, c_i, c_j)}
\end{split}
\end{equation}

$t_{\er o}(\mathcal{G}, \mathcal{D}, \mathcal{S})$ estimates the per-step execution time for parallelization strategy $\mathcal{S}$, which includes forward
processing, back propagation, and parameter synchronization. 

\subsection{Graph Search}
\begin{figure}[t]
\vspace{-3mm}
\begin{center}
\subfloat[Node elimination.]{
\includegraphics[scale=0.35]{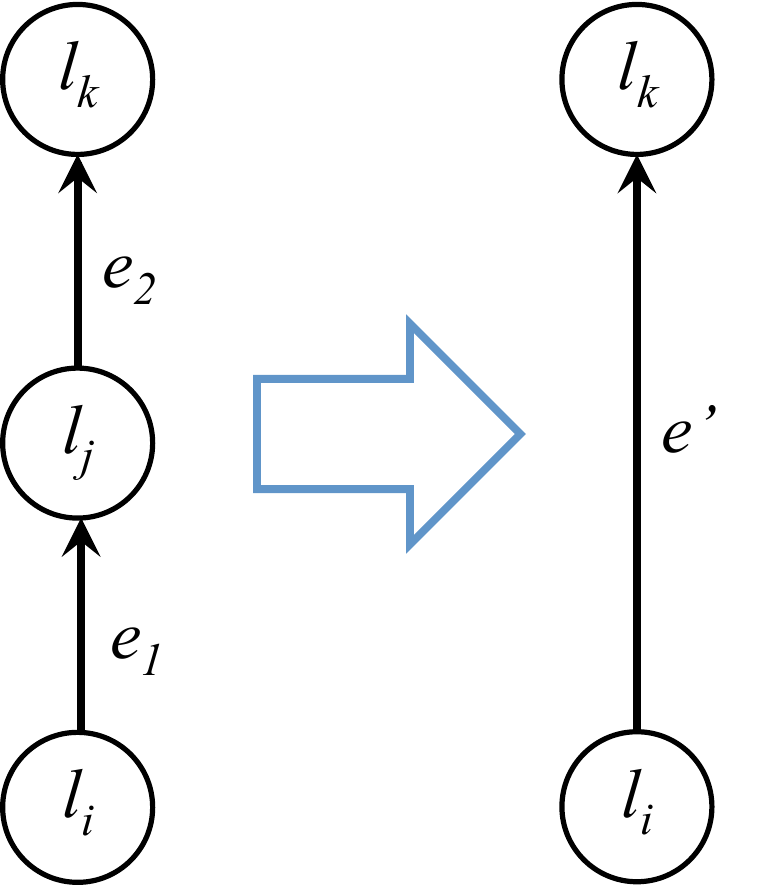}
\label{fig:reduce_node}
}
\subfloat[Edge elimination.]{
\includegraphics[scale=0.35]{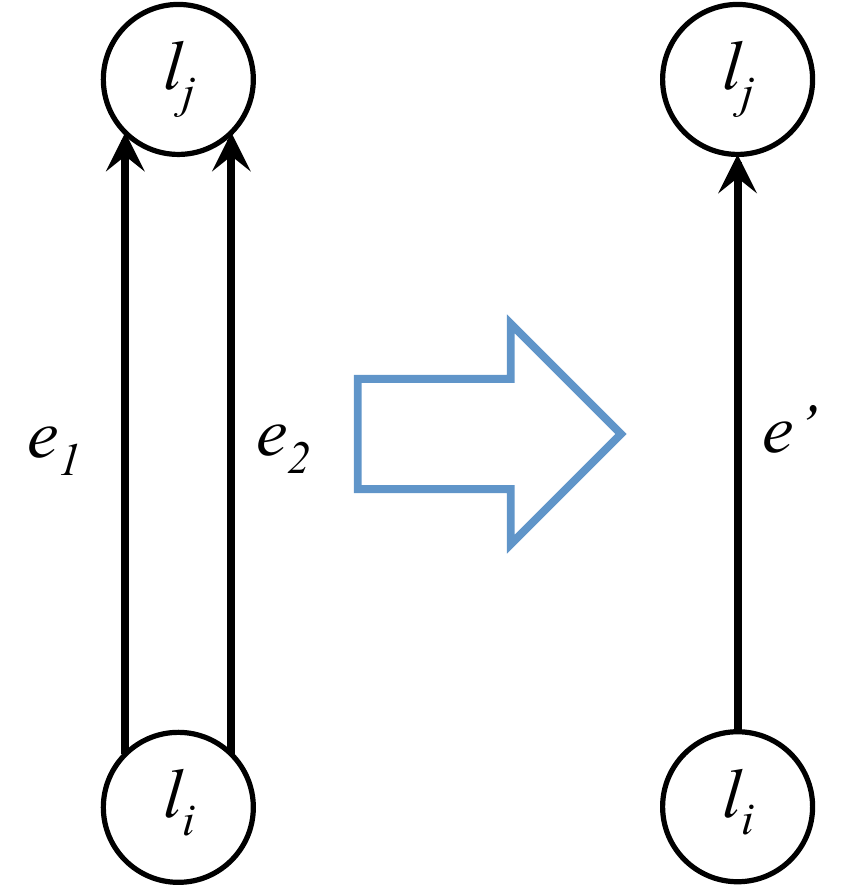}
\label{fig:reduce_edge}
}
\caption{Node and edge elimination on a computation graph.}
\end{center}
\vspace{-3mm}
\end{figure}

Equation~\ref{eqn1} expresses the problem of finding an optimal parallelization strategy as a graph search problem: our goal is to find a strategy $\mathcal{S}$ so that the overall runtime cost $t_{\er o}(\mathcal{G}, \mathcal{D}, \mathcal{S})$ is minimized.

%We first precompute $t_{\er c}(n_i, \cdot)$ and $t_{\er s}(n_i, \cdot)$ for all layers $n_i \in \mathcal{N}$ and $t_{\er x}(e, \cdot, \cdot)$ for all
%edges $e \in \mathcal{E}$. The precomputed cost functions are passed as inputs to an {\em elimination-based graph algorithm} for finding $\mathcal{C}$ that minimizes
%$t_{\er o}(\mathcal{G}, \mathcal{C})$.

Since layer-wise parallelism allows each layer to use an individual configuration, 
the number of potential parallelization strategies is exponential in the number of layers in a computation graph, which makes it impractical
to enumerate all strategies for large CNNs.  However, the CNNs we have seen in practice exhibit strong locality: each layer is only connected to a few
layers with similar depths in a computation graph. Based on this observation, we use the following two graph reductions
to iteratively simplify a computation graph while preserving optimal parallelization strategies.

{\bf Node elimination.} If a computation graph $\mathcal{G}$ includes a node $l_j$ with a single in-edge $e_1 = (l_i, l_j)$ and a single
out-edge $e_2=(l_j, l_k)$, a node elimination removes node $l_j$ and the two edges $e_1$ and $e_2$ from $\mathcal{G}$, inserts a new edge
$e'=(l_i, l_k)$ back into $\mathcal{G}$, and returns the modified graph (Figure~\ref{fig:reduce_node}).
%Note that the transitive edge $e'$ is not in the original graph, therefore its cost function $t_{\er x}(e', \cdot, \cdot)$ is missing. 
We define $t_{\er x}(e', \cdot, \cdot)$ in a way that preserves optimal parallelization strategies (see Theorem~\ref{thm1}). 
%We define $t_{\er x}(e', \cdot, \cdot)$ by using the precomputed cost functions in the original graph.
\begin{equation}
\begin{split}
\label{eqn2}
t_{\er x}(e', c_i, c_k) = \min_{c_j} & \{ t_{\er c}(l_j, c_j) + t_{\er s}(l_j, c_j) \\
& + t_{\er x}(e_1, c_i, c_j) + t_{\er x}(e_2, c_j, c_k)\}
\end{split}
\end{equation}

Intuitively, we use dynamic programming to compute an optimal configuration $c_j$ for node $l_j$ for every possible combination of $c_i$ and $c_k$
and use the cost functions associated with $l_j$ to define $t_{\er x}(e', c_i, c_k)$.
\begin{theorem}
\label{thm1}
Assume $\mathcal{G}'$ = NodeElimination($\mathcal{G}$) and $l_j$ is the eliminated node. If ${\mathcal{S}_o}'$ is an optimal parallelization srategy for $\mathcal{G}'$,
then $\mathcal{S}_o = {\mathcal{S}_o}' + c_j$ is an optimal parallelization strategy for $\mathcal{G}$, where $c_j$ minimizes Equation~\ref{eqn2}.
\end{theorem}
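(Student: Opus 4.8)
The plan is to exploit the fact that the node set of $\mathcal{G}$ is exactly the node set of $\mathcal{G}'$ together with the single eliminated node $l_j$, so that every parallelization strategy $\mathcal{S}$ for $\mathcal{G}$ decomposes uniquely as $\mathcal{S} = \mathcal{S}' + c_j$, where $\mathcal{S}'$ is obtained by forgetting the configuration of $l_j$ and $c_j$ is that configuration; this map $\mathcal{S}\mapsto(\mathcal{S}',c_j)$ is a bijection between strategies for $\mathcal{G}$ and pairs (strategy for $\mathcal{G}'$, configuration for $l_j$). Under this correspondence I would show that the cost-model value $t_{\er o}$ of Equation~\ref{eqn1} (which is what ``optimal'' refers to here, per the setup just before the theorem) splits into a part depending only on $\mathcal{S}'$ and a part depending only on $c_j$ and on the configurations $c_i,c_k$ that $\mathcal{S}'$ assigns to the two neighbors $l_i,l_k$ of $l_j$. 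Since $\mathcal{G}$ and $\mathcal{G}'$ share all nodes but $l_j$ and share all edges except that $\mathcal{G}$ has $e_1=(l_i,l_j)$ and $e_2=(l_j,l_k)$ where $\mathcal{G}'$ has the inserted edge $e'=(l_i,l_k)$, subtracting the two instances of Equation~\ref{eqn1} gives, after cancelling the sums over common nodes and common edges,
\[
t_{\er o}(\mathcal{G},\mathcal{D},\mathcal{S}) - t_{\er o}(\mathcal{G}',\mathcal{D},\mathcal{S}') = \bigl(t_{\er c}(l_j,c_j)+t_{\er s}(l_j,c_j)+t_{\er x}(e_1,c_i,c_j)+t_{\er x}(e_2,c_j,c_k)\bigr) - t_{\er x}(e',c_i,c_k).
\]

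Next I would fix $\mathcal{S}'$ and minimize over $c_j$. On the right-hand side $t_{\er x}(e',c_i,c_k)$ is independent of $c_j$, and the parenthesized term is exactly the quantity minimized in Equation~\ref{eqn2}; so by the very definition of $t_{\er x}(e',\cdot,\cdot)$ the minimum over $c_j$ of the right-hand side is $0$, attained precisely at any $c_j$ achieving the minimum in Equation~\ref{eqn2}. Hence $\min_{c_j} t_{\er o}(\mathcal{G},\mathcal{D},\mathcal{S}'+c_j) = t_{\er o}(\mathcal{G}',\mathcal{D},\mathcal{S}')$ for every $\mathcal{S}'$. Taking the minimum over $\mathcal{S}'$ and using the bijection to write $\min_{\mathcal{S}} = \min_{\mathcal{S}'}\min_{c_j}$, I get $\min_{\mathcal{S}} t_{\er o}(\mathcal{G},\mathcal{D},\mathcal{S}) = \min_{\mathcal{S}'} t_{\er o}(\mathcal{G}',\mathcal{D},\mathcal{S}')$. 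Therefore, if $\mathcal{S}_o'$ attains the right-hand minimum, then $\mathcal{S}_o = \mathcal{S}_o' + c_j$ with $c_j$ minimizing Equation~\ref{eqn2} (for the $c_i,c_k$ prescribed by $\mathcal{S}_o'$) attains the left-hand minimum, i.e.\ is optimal for $\mathcal{G}$, which is the claim.

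The main thing to get right is the bookkeeping in the cost identity: one must verify that the node-cost sum $\sum (t_{\er c}+t_{\er s})$ changes by exactly the $l_j$ term and that the edge-transfer sum changes by exactly $\{e_1,e_2\}$ versus $\{e'\}$. This holds because node elimination leaves every node other than $l_j$ and every edge other than $e_1,e_2$ untouched, and $\mathcal{S}$ and $\mathcal{S}'$ assign the same configuration to every surviving node, so no other edge sees its endpoints' configurations change. A minor point worth a remark is the case where $(l_i,l_k)$ is already an edge of $\mathcal{G}$, making $e'$ a parallel edge; the argument is unaffected since the sums range over edges as a multiset. Note that nothing in this correctness argument uses the structure of CNNs or the \emph{locality} property discussed earlier — those are only needed to bound the running time of the elimination-based search, not its correctness.
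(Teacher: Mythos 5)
Your proposal is correct and rests on the same two ingredients as the paper's proof: the cost-difference identity $t_{\er o}(\mathcal{G},\mathcal{D},\mathcal{S})-t_{\er o}(\mathcal{G}',\mathcal{D},\mathcal{S}')=t_{\er c}(l_j,c_j)+t_{\er s}(l_j,c_j)+t_{\er x}(e_1,c_i,c_j)+t_{\er x}(e_2,c_j,c_k)-t_{\er x}(e',c_i,c_k)$ and the fact that $t_{\er x}(e',\cdot,\cdot)$ is defined as the minimum of that very expression over $c_j$. The paper packages this as a chain of inequalities against an arbitrary competing strategy $\mathcal{S}_1$ while you phrase it as $\min_{\mathcal{S}}=\min_{\mathcal{S}'}\min_{c_j}$ under the strategy bijection, but these are the same argument; your remark about $e'$ possibly duplicating an existing $(l_i,l_k)$ edge is a small point the paper leaves implicit.
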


{\bf Edge elimination.} If a computation graph $\mathcal{G}$ includes two edges with the same source and destination nodes
 (i.e., $e_1 = (l_i, l_j)$ and $e_2 = (l_i, l_j)$), an edge elimination removes $e_1$ and $e_2$ from $\mathcal{G}$, inserts a new edge $e'=(l_i, l_j)$ into $\mathcal{G}$ (Figure~\ref{fig:reduce_edge}).
 We define $t_{\er x}(e', \cdot, \cdot)$ using $t_{\er x}(e_1, \cdot, \cdot)$ and $t_{\er x}(e_2, \cdot, \cdot)$.
\begin{equation}
\label{eqn3}
t_{\er x}(e', c_i, c_j) = t_{\er x}(e_1, c_i, c_j) + t_{\er x}(e_2, c_i, c_j)
\end{equation}
\begin{theorem}
\label{thm2}
Assume $\mathcal{G}'$ = EdgeElimination($\mathcal{G}$), and ${\mathcal{S}_o}'$ is an optimal parallelization strategy of $\mathcal{G}'$,
then ${\mathcal{S}_o} = {\mathcal{S}_o}'$ is an optimal parallelization strategy of $\mathcal{G}$.
\end{theorem}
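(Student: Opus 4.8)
The plan is to show that the edge elimination preserves the total cost $t_{\er o}$ exactly, for every fixed parallelization strategy, so that minimizing over strategies gives the same optimum on $\mathcal{G}$ and on $\mathcal{G}'$. The key observation is that edge elimination does not touch any node: the vertex set of $\mathcal{G}'$ equals that of $\mathcal{G}$, and hence a parallelization strategy for $\mathcal{G}$ is literally the same object as a parallelization strategy for $\mathcal{G}'$ (a choice of $c_i$ for each layer $l_i$). So there is no ``lifting'' step as in Theorem~\ref{thm1} — the bijection between strategies is the identity, which is why the statement reads ${\mathcal{S}_o} = {\mathcal{S}_o}'$.

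First I would fix an arbitrary strategy $\mathcal{S}$ and compare $t_{\er o}(\mathcal{G}, \mathcal{D}, \mathcal{S})$ with $t_{\er o}(\mathcal{G}', \mathcal{D}, \mathcal{S})$ term by term using Equation~\ref{eqn1}. The sum $\sum_{l_i}\{t_{\er c}(l_i, c_i) + t_{\er s}(l_i, c_i)\}$ is identical on both sides since the node sets agree and the per-node cost functions are unchanged. For the edge sum, the only difference between $\mathcal{G}$ and $\mathcal{G}'$ is that the two parallel edges $e_1 = (l_i, l_j)$ and $e_2 = (l_i, l_j)$ have been replaced by the single edge $e' = (l_i, l_j)$. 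All other edges appear unchanged in both graphs and contribute equal terms. Thus
\begin{equation}
t_{\er o}(\mathcal{G}, \mathcal{D}, \mathcal{S}) - t_{\er o}(\mathcal{G}', \mathcal{D}, \mathcal{S}) = t_{\er x}(e_1, c_i, c_j) + t_{\er x}(e_2, c_i, c_j) - t_{\er x}(e', c_i, c_j),
\end{equation}
which is exactly zero by the definition of $t_{\er x}(e', c_i, c_j)$ in Equation~\ref{eqn3}. Hence $t_{\er o}(\mathcal{G}, \mathcal{D}, \mathcal{S}) = t_{\er o}(\mathcal{G}', \mathcal{D}, \mathcal{S})$ for every $\mathcal{S}$.

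Finally, since the two cost functions agree pointwise over the common domain of strategies, they attain their minimum at the same strategies: if ${\mathcal{S}_o}'$ minimizes $t_{\er o}(\mathcal{G}', \mathcal{D}, \cdot)$, then for any $\mathcal{S}$ we have $t_{\er o}(\mathcal{G}, \mathcal{D}, \mathcal{S}) = t_{\er o}(\mathcal{G}', \mathcal{D}, \mathcal{S}) \geq t_{\er o}(\mathcal{G}', \mathcal{D}, {\mathcal{S}_o}') = t_{\er o}(\mathcal{G}, \mathcal{D}, {\mathcal{S}_o}')$, so ${\mathcal{S}_o}'$ is optimal for $\mathcal{G}$ as well. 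This proof is essentially a bookkeeping argument; I do not anticipate a genuine obstacle, but the one point requiring a little care is justifying that $t_{\er o}$ is the right quantity to track — i.e.\ that the graph reductions are meant to preserve the surrogate cost $t_{\er o}$ of Equation~\ref{eqn1} rather than the true runtime $t$ — and that the edge sets of $\mathcal{G}$ and $\mathcal{G}'$ differ in precisely the controlled way described above (in particular that no third parallel edge or incident structure is disturbed), which should be read off directly from the definition of EdgeElimination.
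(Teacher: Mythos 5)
Your proposal is correct and follows essentially the same argument as the paper: both show that edge elimination leaves $t_{\er o}$ unchanged for every fixed strategy (since the node set is untouched and Equation~\ref{eqn3} makes the replaced edge terms cancel exactly), and then conclude the minimizers coincide. The paper merely phrases this as a chain of (in)equalities $t_{\er o}(\mathcal{G}, \mathcal{S}_1) \geq t_{\er o}(\mathcal{G'}, \mathcal{S}_1) \geq t_{\er o}(\mathcal{G'}, {\mathcal{S}_o}') = t_{\er o}(\mathcal{G}, \mathcal{S}_o)$ mirroring the node-elimination proof, whereas you state the pointwise equality directly.
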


\begin{algorithm}[t]
\caption{Finding Optimal Parallelization Strategy $\mathcal{S}$.}
\begin{algorithmic}[1]
\State {\bf Input:} A computation graph $\mathcal{G}$, a device graph $\mathcal{D}$, and precomputed cost functions (i.e., $t_{\er c} (\cdot)$, $t_{\er s} (\cdot)$ and $t_{\er x}(\cdot)$ )
\State {\bf Output:} A parallelization strategy $\mathcal{S}$ minimizing $t_{\er o}(\mathcal{G}, \mathcal{D}, \mathcal{S})$
\State
\State $\mathcal{G}^{(0)} = \mathcal{G}$
\State $m = 0$
\While{true}
\State $\mathcal{G}^{(m+1)}$ = \Call{NodeElimination}{$\mathcal{G}^{(m)}$}
\State $\mathcal{G}^{(m+2)}$ = \Call{EdgeElimination}{$\mathcal{G}^{(m+1)}$}
\If {$\mathcal{G}^{(m+2)} = \mathcal{G}^{(m)}$}
\State {\bf break}
\EndIf
\State $m=m+2$
\EndWhile
\State Find the optimal strategy $\mathcal{S}^{(m)}$ for $\mathcal{G}^{(m)}$ by enumerating all possible candidate strategies
\For {$i = m$-1 to $0$}
\If {$\mathcal{G}^{(i+1)}$ = \Call{NodeElimination}{$\mathcal{G}^{(i)}$}}
\State \Comment{Assume $l_j$ is the node eliminated from $\mathcal{G}^{(i)}$}
\State Find $c_j$ that minimizes Equation~\ref{eqn1}
\State $\mathcal{S}^{(i)} = \mathcal{S}^{(i+1)} + c_j$
\Else
\State $\mathcal{S}^{(i)} = \mathcal{S}^{(i+1)}$
\EndIf
\EndFor
\State {\bf return} $\mathcal{S}^{(0)}$
\end{algorithmic}
\label{alg1}
\end{algorithm}

\begin{figure*}[h]
\vspace{-3mm}
\begin{center}
\subfloat[Initial graph.] {
\includegraphics[scale=0.25]{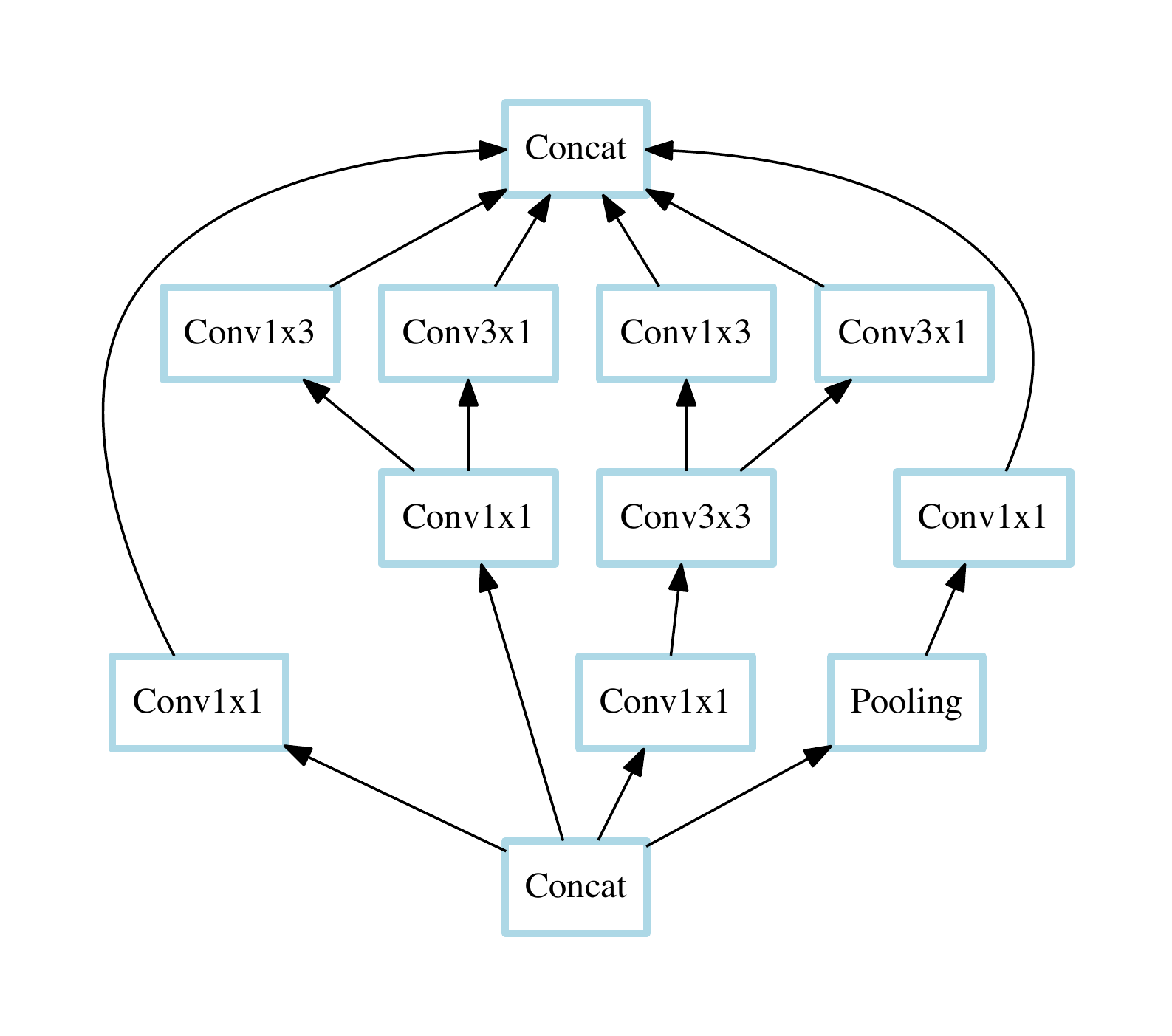}
}
\subfloat[Node elimination.] {
\includegraphics[scale=0.25]{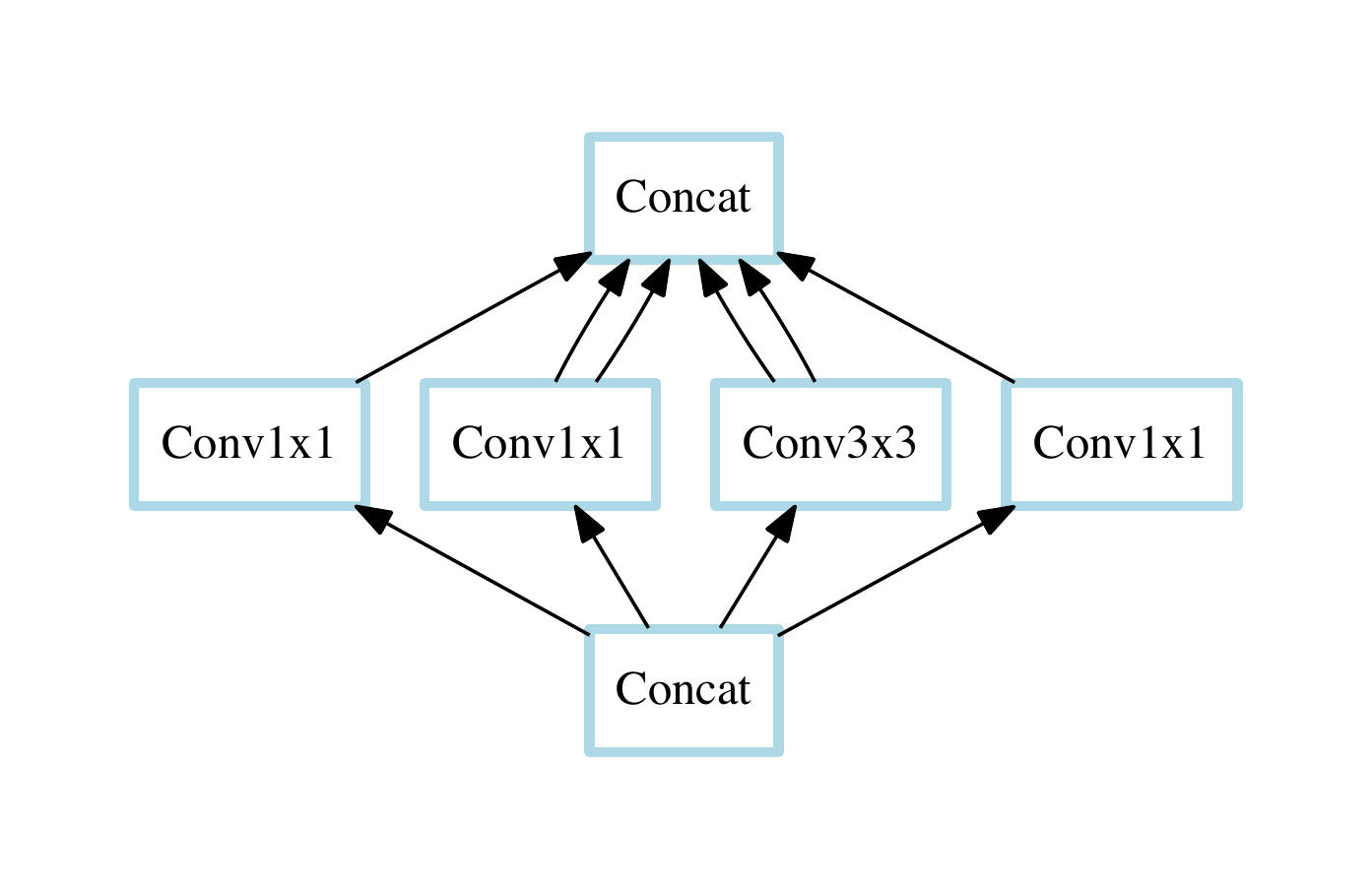}
}
\subfloat[Edge elimination.] {
\includegraphics[scale=0.25]{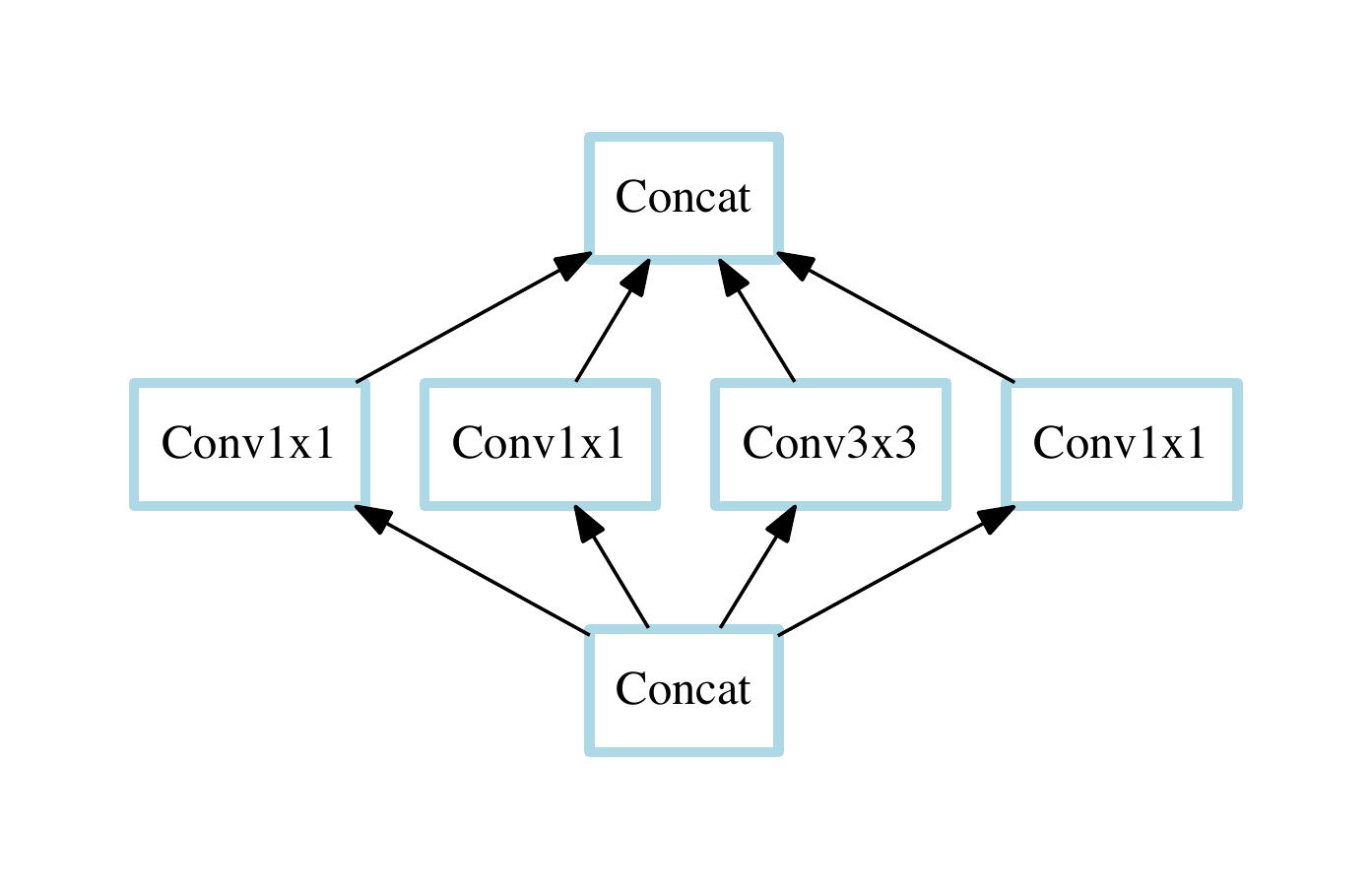}
}
\subfloat[Node elimination.] {
\includegraphics[scale=0.54]{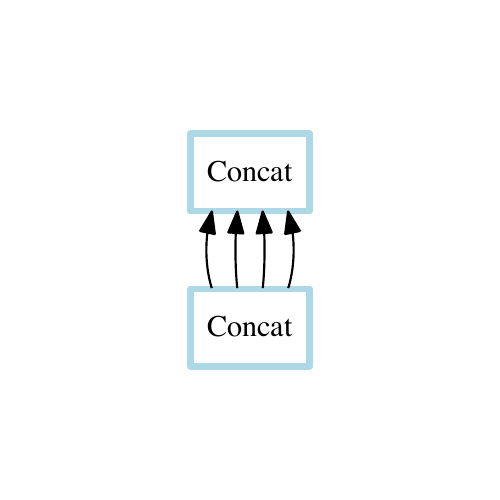}
}
\subfloat[Edge elimination.] {
\includegraphics[scale=0.54]{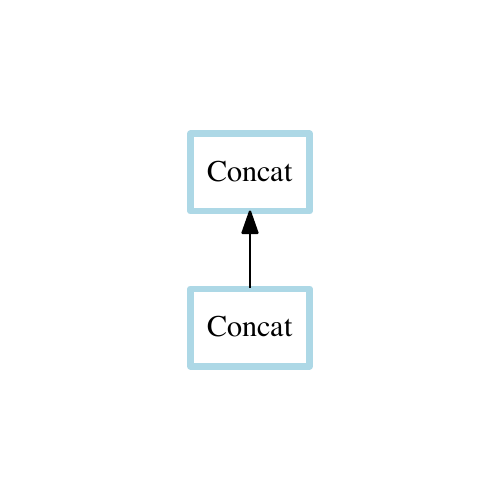}
}
\caption{Iteratively performing node/edge eliminations on an Inception module.}
\label{fig:reduce_example}
\end{center}
\vspace{-4mm}
\end{figure*}

We formally define node and edge eliminations and prove Theorem~\ref{thm1} and~\ref{thm2} in the appendix.\footnote{An extended version of this paper with proofs is available at https://arxiv.org/abs/1802.04924.}
The two theorems show that given an optimal parallelization strategy for the modified
graph, we can easily construct an optimal strategy for the original graph.

Algorithm~\ref{alg1} shows pseudocode using node and edge eliminations as subroutines to find an optimal parallelization strategy under our cost model.
The algorithm first iteratively uses node and edge eliminations to simplify an input computation graph until neither elimination can be applied (lines 4-13). 
Figure~\ref{fig:reduce_example} demonstrates how node and edge eliminations are performed on an Inception module~\cite{Inception}.
%The elimination technique works efficiently on a wide range of real-world CNNs including \alexnet~\cite{AlexNet}, \vgg~\cite{VGG}, \inception~\cite{Inception}, and ResNet~\cite{Resnet}, all of which are reduced to a graph with only 2 nodes.

After the elimination phase, the algorithm enumerates all potential strategies for the final graph $\mathcal{G}^{(m)}$ and chooses 
$\mathcal{S}^{(m)}$ that minimizes $t_{\er o}(\mathcal{G}^{(m)}, \mathcal{D}, \mathcal{S}^{(m)})$ (line 14).
After deciding the configuration for each node in $\mathcal{G}^{(m)}$, we then decide the configurations for the eliminated nodes by iteratively undoing the eliminations in reverse order (lines 15-23). Theorem~\ref{thm1} and~\ref{thm2} guarantee that $\mathcal{S}^{(i)}$ is an optimal strategy for $\mathcal{G}^{(i)}$ ($0\leq i \leq m$). Finally, $\mathcal{S}^{(0)}$ is an optimal parallelization strategy for the original graph $\mathcal{G}$.

\begin{table}[t]
\vspace{-3mm}
\caption{Time complexity of Algorithm~\ref{alg1}. $C$ is the maximum number of potential configurations for a layer. $N$ and $E$ are the number of nodes and edges in $\mathcal{G}$, respectively. $K$ is the number of nodes in the final graph after node and edge eliminations.}
\label{tab:complexity}
\centering
\resizebox{\columnwidth}{!}{
\begin{tabular}{|l|l|}
\hline
{\bf Step} & {\bf Time Complexity}\\
\hline
Performing node and edge eliminations & $O(E C^3)$ \\
\hline
Finding the optimal strategy for the & \multirow{2}{*}{$O(K  C^K)$}\\
final graph  & \\
\hline
Undoing node and edge eliminations & $O(E  C)$\\
\hline
\hline
Overall  & $O(E  C^3 + K  C^K)$ \\
\hline
\end{tabular}
}
\vspace{-3mm}
\end{table}

{\bf Time complexity.} Table~\ref{tab:complexity} shows the time complexity of Algorithm~\ref{alg1}. Performing a node or edge elimination requires computing Equation~\ref{eqn2} or~\ref{eqn3} for the inserted edge, which takes $O(C^3)$ and $O(C^2)$ time, respectively.
The total number of node and edge eliminations is smaller than $E$, since an elimination reduces the number of edges in the graph by one. Therefore, the time complexity for performing and undoing node and edge eliminations is $O(EC^3)$ and $O(EC)$, respectively.
The algorithm enumerates all possible strategies for the final graph $\mathcal{G}^{(m)}$, which takes $O(KC^K)$ time.
The algorithm works efficiently on a wide range of real-world CNNs including \alexnet~\cite{AlexNet}, VGG~\cite{VGG}, \inception~\cite{Inception}, and ResNet~\cite{Resnet}, all of which are reduced to a final graph with only 2 nodes (i.e., $K=2$).

\begin{table}[t]
%\vspace{-5mm}
\caption{Execution time for finding the optimal parallelization strategy for 4 GPUs. 
Note that the number of nodes in the final graph (i.e., $K$) is equal to 2 for all networks.
For LeNet-5 and \alexnet, two algorithms find the same optimal strategies.}
\label{tab:algorithm_runtimes}
\centering
\resizebox{\columnwidth}{!}{
\begin{tabular}{|r|c|c|c|}
\hline
{\bf Network} & {\bf \# Layers} & {\bf Baseline} & {\bf Our Algorithm} \\
\hline
LeNet-5 & 6 & 5.6 seconds & 0.01 seconds \\
\alexnet & 11 & 2.1 hours & 0.02 seconds \\
\vgg & 21 &$>$ 24 hours & 0.1 seconds \\
\inception & 102 & $>$ 24 hours & 0.4 seconds \\
\hline
\hline
\multicolumn{2}{|c|}{Time complexity} & $O(EC^N)$ & $O(EC^3)$ \\
\hline
\end{tabular}
}
\vspace{-3mm}
\end{table}

We compare Algorithm~\ref{alg1} with a baseline algorithm that uses a {\em depth-first search} algorithm to find an optimal strategy
for the original graph $\mathcal{G}$. Table~\ref{tab:algorithm_runtimes} compares the time complexity and actual execution time of the two algorithms.
Our algorithm achieves lower time complexity and reduces the execution time by orders of magnitude over the baseline.

\ifdeadcode
\section{\Sys}
\label{sec:graph}

%In this section we describe \Sys, a framework that explores parallelism in all parallelizable dimensions to accelerate the training of CNNs.
%\Sys optimizes parallelism for each individual layer.
Similar to \tensorflow and \pytorch, \Sys uses {\em computation graphs} to 
describe dependencies between operations.
In a computation graph $G=(V, E)$, each node $n\in V$ is an operation (e.g., a convolution or matrix-multiply),
and each directed edge $(u, v) \in E$ is a tensor that is an output of $u$ and an input of $v$.

One key difference between \Sys and \tensorflow or \pytorch is that each node in the \Sys computation graph
also includes a {\em configuration} that describes how the corresponding operation is parallelized
across different workers. For each parallelizable dimension (i.e., image, height, width, and channel), the configuration includes an integer
that describes the degree of parallelism in that dimension. For a configuration, the product of the integers over all dimensions is the number of workers needed to process the operation in that configuration.
Figure~\ref{fig:config_examples} demonstrates some example configurations that
explore parallelism in a single dimension as well as combinations of different dimensions.
% For a convolutional layer, a configuration ($n$=1, $c$=1, $h$=2, $w$=2) indicates 
%that the convolutional layer is parallelized across 4 workers, with each worker training a quarter of the image (shown as Figure~\ref{fig:config_example}).
\Sys assumes equal partitioning in each dimension. As a result, each worker receives the same size input, which provides
well-balanced workload distribution in our experiments. 

For each node in the computation graph, its configuration describes how the {\em output} tensor is divided onto multiple workers. Each worker
computes a {\em disjoint} subset of the output tensor, and thus each worker can process the operation in parallel without data dependencies. Given a node's configuration, \Sys calculates the input sets
 for each worker and automatically schedules proper data transfers between operations.
%Note that for some operations (e.g., convolution), different workers may share a portion of input data, in which case \Sys automatically replicates the shared input data across these workers.

\Sys also provides three additional functions:
\begin{itemize}
\item For each node $v$ and configuration $c$, $v.{\er{compute}}(c)$ estimates the time to process the corresponding operation under the parallelism configuration $c$. 
This includes both the forward processing and back propagation time and
is estimated by running the operation in that configuration multiple times on the device and measuring the average execution time. 
\item For each edge $e=(u, v)$, $e.{\er{xfer}}(c_u, c_v)$ estimates the time to transfer the input tensor $e$ to each worker, using the size of the data to be moved and the known communication bandwidth.
Note that $e.{\er{xfer}}(c_u, c_v)$ is zero if $u$ and $v$ have the same configuration (i.e., $c_u = c_v$), in which case no data is transferred.
As with $\er{compute}()$, we precompute the $\er{xfer}()$ function for each edge in the graph by calculating the overall data transfer size for all possible source and destination configurations.
\item For each node $v$ and configuration $c$, $v.{\er{update}}(c)$ estimates the time to update parameters for the corresponding operation. We use the data transfer time
to approximate the update time, since the data transfer time is much longer than the compute time for updating parameters. Note that different configurations can have
significantly different update time, as described in Section~\ref{subsec:data_transfer}.
\end{itemize}

A {\em global configuration} $g$ includes a parallelism configuration for each node in a computation graph: $g(v)$ describes the parallelism configuration for node $v$. Using the functions defined above,
we can model the per-step execution time for a computation graph: 

\begin{equation}
\label{eqn1}
{\er{Cost}}(g, (V, E)) = \sum_{v\in V}\{v.{\er{compute}}(g(v)) + v.{\er{update}}(g(v))\} + \sum_{e=(u,v)\in E}{e.{\er{xfer}}(g(u), g(v))}
\end{equation}

${\er{Cost}}(g, (V, E))$ estimates the per-step execution time if the computation graph $(V, E)$ is parallelized using global configuration $g$. This execution time includes forwarding processing,
backward propagation, and gradient aggregation. Equation~\ref{eqn1} expresses the problem of finding the configuration for each individual node as a global optimization problem. 
%We describes our algorithm for solving this optimization problem in Section~\ref{sec:search}.
\fi

\section{Experiments}
\label{sec:exp}
We found that it is non-trivial to parallelize a layer in the height, width, and channel dimensions in existing frameworks (e.g., TensorFlow, PyTorch, and Caffe2), and none provides an interface for controlling parallelization at the granularity of individual layers. 
Therefore, we implemented our framework in Legion~\cite{Legion12}, a high-performance parallel runtime for distributed heterogeneous architectures, and use cuDNN~\cite{cudnn} and cuBLAS~\cite{cublas} as the underlying libraries to process neural network layers. 
The following Legion features significantly simplify our implementation.
First, Legion supports high-dimensional partitioning that allows us to parallelizing any layer in any combination of the dimensions. 
Second, Legion permits control of parallelization at the granularity of each layer.
Third, Legion allows fine-grain control over the placement of tasks and data.
Fourth, the underlying implementation of Legion automatically and systematically overlaps communication with computation and optimizes the path and pipelining of data movement across the machine~\cite{Realm14, DMA, Lux}.

%We describe implementation details in the supplementary materials.
%In the experiments, we compare the runtime performance of layer-wise parallelism against state-of-the-art baselines described in Section~\ref{sec:related}.

{\bf Benchmarks.} We evaluate our approach on three established CNNs.
\alexnet~\cite{AlexNet} is the winner of the ILSVRC-2012 image classification competition.
\vgg~\cite{VGG} improves network accuracy by pushing the depth of the network to 16 weighted layers.
\inception~\cite{Inception} is a 102-layer deep CNN that uses carefully designed Inception modules to increase the number of layers while maintaining a reasonable computational budget.

{\bf Datasets.} We evaluate the runtime performance of all three CNNs on the ImageNet-1K dataset~\cite{imagenet2009} that consists of 1.2 million images from 1,000 categories. 
%The ImageNet-22K data~\cite{imagenet} is a much larger dataset containing 14.2 million images from 21,841 different categories. The three benchmarks are originally designed 
%for the ImageNet-1K dataset with a 1,000-way softmax layer at the end. To train the three networks on the ImageNet-22K dataset, we change the last fully-connected layer to
%have 21,841 neurons followed by a 21,841-way softmax layer.

{\bf Baselines.} We compare the following parallelization strategies in the experiments.
\begin{itemize}
\item[1.] {\bf Data parallelism} is the most common parallelization strategy for large-scale training~\cite{LargeSGD, Tensorflow}. In data parallelism, each device has a copy of the entire network and processes a subset of the training dataset.
\item[2.] {\bf Model parallelism.} We use a model parallelism approach~\cite{OWT} as a baseline, which distributes the network parameters in each layer equally to all devices, providing good load balancing.
\item[3.] {\bf OWT parallelism} is designed to reduce communication costs by using data parallelism for convolutional and pooling layers and switching to model parallelism for densely-connected layers.
\item[4.] {\bf Layer-wise parallelism.} Given a computation graph and a set of available devices, we run the algorithm described in Section~\ref{sec:method} to find a parallelization strategy minimizing Equation~\ref{eqn1}.
\end{itemize}

{\bf Experimental setup.} All experiments were performed on a GPU cluster with 4 compute nodes, each of which is equipped with two Intel 10-core E5-2600 CPUs, 256G main memory, and four NVIDIA Tesla P100 GPUs. GPUs on the same node are connected by NVLink, and nodes are connected over 100Gb/s EDR Infiniband.
We use synchronous training and a per-GPU batch size of 32 for all experiments.

To rule out implementation differences, we ran data parallelism experiments in TensorFlow r1.7, PyTorch v0.3, and our implementation and compared the runtime performance. 
Our Legion-based framework achieves the same or better runtime performance on all three CNNs compared to TensorFlow and PyTorch, and therefore we report the data parallelism performance achieved by our framework. 
%We run data parallelism experiments in \tensorflow r1.7, \pytorch v0.3, and our implementation and report the best runtime performance number of the three. 
%All other experiments were also performed in our framework, which uses the Legion task-based runtime~\cite{Legion12}.

\subsection{Runtime Performance}
\label{subsec:eval_results}
\begin{figure*}[t!]
\begin{center}
\includegraphics[scale=0.36]{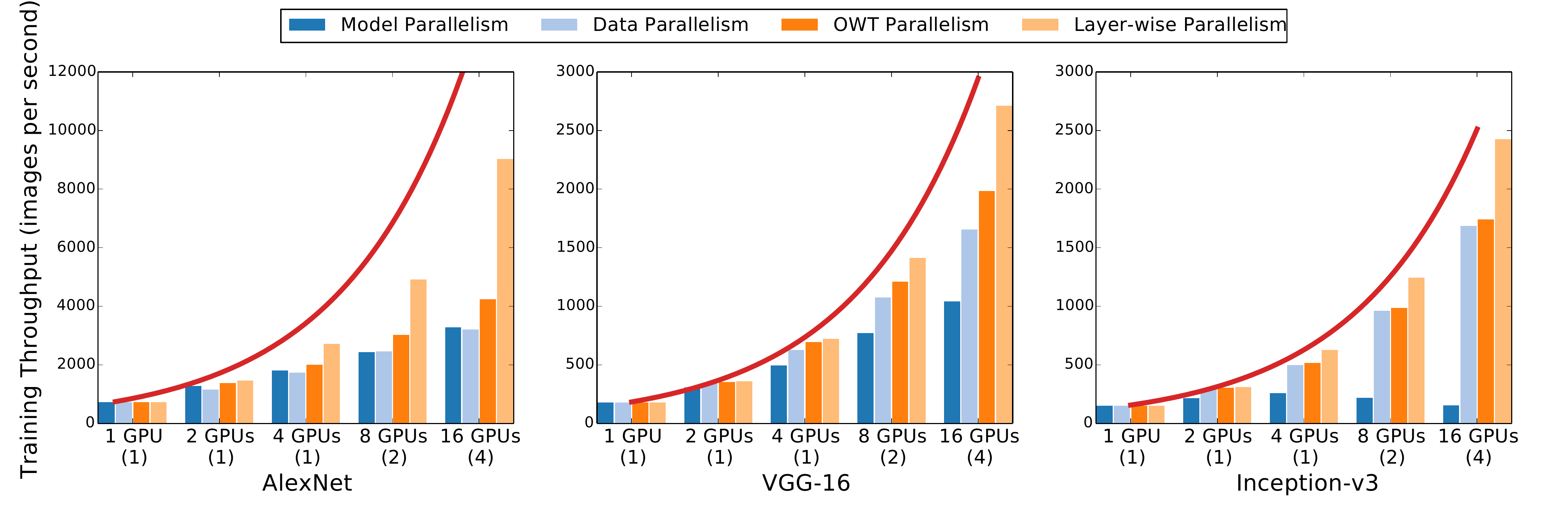}
\vspace{-5mm}
\caption{Training throughput (i.e., number of images processed per second) with different parallelization strategies (higher is better). 
Numbers in parenthesis are the number of compute nodes used in the experiments. 
The red lines show the training throughput in linear scale (ideal case).}
\label{fig:train_tp}
\end{center}
\end{figure*}

\begin{figure*}[t!]
%\vspace{-3mm}
\begin{center}
\includegraphics[scale=0.36]{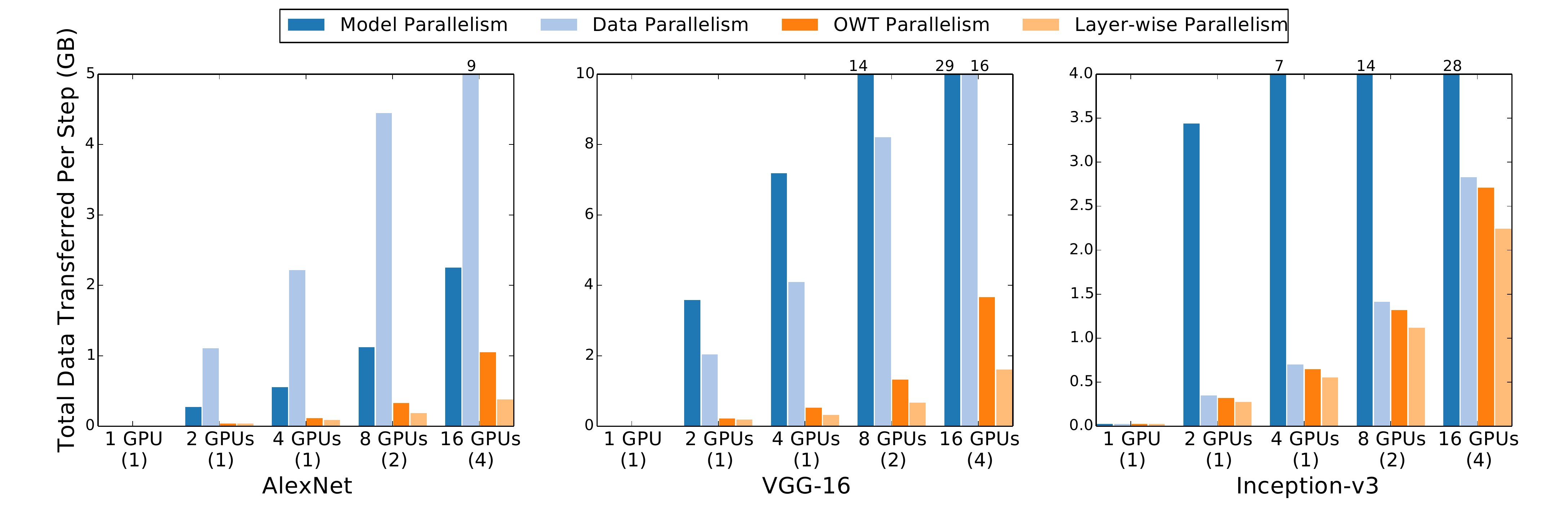}
\vspace{-5mm}
\caption{Communication cost (i.e., data transferred in each step) with different parallelization strategies (lower is better).}
\label{fig:transfer_overhead}
\end{center}
\vspace{-3mm}
\end{figure*}

We compare the training throughput and communication cost among different parallelization strategies.
Figure~\ref{fig:train_tp} shows the training throughputs with different CNNs and different sets of available devices.
Both model and data parallelism scale well on a single node but show limited scalability in distributed training, where the inter-node communication limits runtime performance.
OWT parallelism achieves improved runtime performance by switching to model parallelism for densely-connected layers to reduce communication costs.
%In layer-wise parallelism, we first find the optimal global configuration for a given CNN and a given set of available devices and then use the discovered configuration to parallelize the training process.
In all experiments, layer-wise parallelism consistently outperforms the other strategies and increases the training throughput by up to 2.2$\times$, 1.5$\times$, and 1.4$\times$ for \alexnet, \vgg, and \inception, respectively.%, compared to the best performance achieved by the baseline strategies. 
%In addition, the performance gap between layer-wise parallelism and the other baselines increases as we scale to larger numbers of machines, showing that layer-wise parallelism is more efficient for large-scale distributed training.

In addition, layer-wise parallelism achieves better scalability than the other strategies. 
Scaling the training of the three CNNs from 1 GPU to 16 GPUs (on 4 nodes), layer-wise parallelism achieves 12.2$\times$, 14.8$\times$, and 15.5$\times$ speedup for \alexnet, \vgg, and \inception, respectively, while the best other strategy achieves 6.1$\times$, 10.2$\times$, and 11.2$\times$ speedup.
Moreover, Figure~\ref{fig:train_tp} shows that the layer-wise parallelism can help bridge the runtime performance gap between the ideal training throughputs in linear scale (the red lines) and the actual training throughputs achieved by current parallelization strategies.
This shows that layer-wise parallelism is more efficient for large-scale training.

%The red lines in Figure~\ref{fig:train_tp} show the training throughputs on the three CNNs in linear scale. 
%The training throughputs achieved by layer-wise parallelism effectively bridges the performance gap between 

Communication cost is another important performance metric in large-scale training.
Figure~\ref{fig:transfer_overhead} compares the communication costs of different strategies.
OWT parallelism eliminates gradient synchronization for densely-connected layers and reduces overall communication costs by 1.1-23.0$\times$ compared to data and model parallelism.
In addition, Layer-wise parallelism outperforms OWT parallelism by further reducing communication overhead by 1.2-2.5$\times$.

\subsection{Cost Model}
\label{subsec:eval_accuracy}
\begin{table}[t]
\vspace{-3mm}
\centering
\caption{Relative difference between estimated execution time $t_{\er o}(\mathcal{G}, \mathcal{C})$ and actual execution time $t(\mathcal{G}, \mathcal{C})$.}
\resizebox{0.95\columnwidth}{!}{
\begin{tabular}{|r|c|c|c|}
\hline
\multirow{2}{*}{\bf Available Devices} & \multicolumn{3}{c|}{$(t_{\er o}(\mathcal{G}, \mathcal{C}) - t(\mathcal{G}, \mathcal{C})) / t(\mathcal{G}, \mathcal{C})$} \\
\cline{2-4}
 & {\bf \alexnet} & {\bf \vgg} & {\bf \inception} \\
\hline
1 GPU (1 node) & 1\% & 0\% & 1\%\\
2 GPUs (1 node) & 4\% & 3\% & 5\%\\
4 GPUs (1 node) & {-5\%} & 2\% & 5\%\\
8 GPUs (2 nodes) & 2\% & 6\% & {9\%}\\
16 GPUs (4 nodes) & -1\% & {7\%} & 6\%\\
\hline
\end{tabular}
}
\vspace{-3mm}
\label{tab:accuracy}
\end{table}
%\begin{table*}[t]
%\centering
%\caption{Comparison between our estimated execution time $t_{\er o}(\mathcal{C})$ and real execution time.}
%\resizebox{\textwidth}{!}{
%\begin{tabular}{|c|ccc|ccc|ccc|}
%\hline
%\multirow{2}{*}{\bf Available Devices} & \multicolumn{3}{c|}{\bf \alexnet} & \multicolumn{3}{c|}{\bf \vgg} & \multicolumn{3}{c|}{\bf \inception} \\
%\cline{2-10}
% & {\bf $t_{\er o}(\mathcal{G}, \mathcal{C}) $} & {\bf Real Time} & {\bf Difference} & {\bf $t_{\er o}(\mathcal{G}, \mathcal{C}) $} & {\bf Real Time} & {\bf Difference}  & {\bf $t_{\er o}(\mathcal{G}, \mathcal{C}) $} & {\bf Real Time} & {\bf Difference}  \\
%\hline
%1 machine (1 GPU) & 0 & 0 & 0 & 175 & 174 & 1\% & 0 & 0 & 0\\
%1 machine (2 GPUs) & 0 & 0 & 0 & 177 & 176 & 1\% & 0 & 0 & 0\\
%1 machine (4 GPUs) & 0 & 0 & 0 & 0 & 0 & 0 & 0 & 0 & 0\\
%2 machine (8 GPUs) & 0 & 0 & 0 & 0 & 0 & 0 & 0 & 0 & 0\\
%4 machine (16 GPUs) & 0 & 0 & 0 & 0 & 0 & 0 & 0 & 0 & 0\\
%\hline
%\end{tabular}
%}
%\label{tab:accuracy}
%\end{table*}

We compare the estimated execution time $t_{\er o}(\mathcal{G}, \mathcal{D}, \mathcal{C})$ projected by our cost model (see Equation~\ref{eqn1}) with the measured per-step execution time $t(\mathcal{G}, \mathcal{D}, \mathcal{C})$ in the experiments. The results are shown in Table~\ref{tab:accuracy}. In all experiments, the relative difference between the estimated
and the real execution time is within 10\%, showing that the cost model can reliably predict a CNN's per-step execution time given the set of available devices and the connections between them. 
%Therefore, we believe that our model is useful for choosing the set of available devices for achieving efficient runtime performance.
%This provides insight into choosing the set of available devices for achieving optimal performance.

\subsection{Analysis of Optimal Parallelization Strategies}
\label{subsec:eval_analysis}
\begin{table}[t]
%\vspace{-3mm}
\centering
\caption{An optimal parallelization strategy under the cost model for parallelizing \vgg on 4 GPUs on a single compute node.}
\resizebox{0.9\columnwidth}{!}{
\begin{tabular}{|c|c|}
\hline
{\bf Layers} & {\bf Parallelization Configuration}\\
\hline
2 x Conv + Pooling & \multirow{4}{*}{\{n=4, h=1, w=1, c=1\}} \\%& \multirow{4}{*}{Parallelism in the image dimension on 4 GPUs.}\\
2 x Conv + Pooling & \\
3 x Conv + Pooling & \\
3 x Conv + Pooling & \\
\hline
3 x Conv + Pooling & \{n=1, h=2, w=2, c=1\} \\%& Parallelism in the height and width dimensions on 4 GPUs.\\
\hline
Fully-connected & \multirow{2}{*}{\{n=1, c=4\}}\\ %& \multirow{2}{*}{Parallelism in the channel dimension on 4 GPUs.}\\
Fully-connected & \\
\hline
Fully-connected & \{n=1, c=2\} \\%& Parallelism in the channel dimension on 2 GPUs. \\
\hline
Softmax & \{n=1, c=1\} \\%& Processing the final softmax layer on a single GPU.\\
\hline
\end{tabular}
}
\label{tab:vgg_config}
\vspace{-3mm}
\end{table}

We analyze the optimal parallelization strategies under our cost model and find several similarities among them.
%The optimal global configurations discovered by our algorithm have several similarities.

First, for the beginning layers of a CNN with large height/width dimensions and a small channel dimension, an optimal strategy usually uses data parallelism on all available devices, since the communication costs for synchronizing gradients are much smaller than the communication costs for moving tensors between layers.

Second, deeper layers in a CNN tend to have smaller height/width dimensions and a larger channel dimension.
As a result, the costs for moving tensors between different layers decrease, while the costs for synchronizing parameters increase.
An optimal strategy adaptively reduces the number of devices for these layers to reduce communication costs to synchronize parameters and
opportunistically uses parallelism in the height/width dimensions to achieve better runtime performance.

Finally, for densely-connected layers, an optimal strategy eventually switches to model parallelism on a small number of devices, because 
synchronizing gradients and transferring tensors are both much more expensive than the execution time for densely-connected layers.
This reduces the communication costs for synchronizing parameters and moving tensors at the cost of only using a subset of available devices.

Table~\ref{tab:vgg_config} shows an optimal strategy under the cost model for parallelizing \vgg on 4 GPUs.
This strategy first uses parallelism in the sample dimension for the beginning convolutional and pooling layers and then uses parallelism in both the height and width dimensions to accelerate the last three convolutional layers. 
For the fully-connected layers, it uses parallelism in the channel dimension to reduce communication costs and adaptively decreases the degrees of parallelism.

\section{Conclusion}
We have introduced layer-wise parallelism, which allows each layer in a CNN to use an individual parallelization configuration. 
We propose a cost model that quantitively evaluates the runtime performance of different strategies and use a dynamic programming based graph search algorithm to find a globally optimal strategy under the cost model.
Our experiments show that layer-wise parallelism significantly outperforms state-of-the-art strategies for CNNs by increasing training throughput, reducing communication costs, and achieving better scalability on larger numbers of devices.
%We are planning to open source our algorithm and system implementation to facilitate further study and research.

%We have presented \Sys, a deep learning framework that explores parallelism in all parallelizable dimensions to
 %accelerate the training of CNNs. 
%\Sys optimizes the parallelism configuration chosen at the granularity of individual layers.
% \Sys achieves up to 6.5$\times$ for training CNNs and reduces overall data transfers by up to 23$\times$ compared to state-of-the-art deep learning frameworks.

\section*{Acknowledgements}
This research was supported by NSF grant CCF-1160904 and the Exascale Computing Project (17-SC-20-SC), a collaborative effort of the U.S. Department of Energy Office of Science and the National Nuclear Security Administration.

%\balance
\bibliography{bibliography}
\bibliographystyle{icml2018}
\appendix

\section{Node and Edge Eliminations}
We define node and edge eliminations in Algorithm~\ref{alg2}.
\begin{algorithm}
\caption{Node and edge eliminations.}
\label{alg2}
\begin{algorithmic}[1]
\Function{NodeElimination}{$\mathcal{G}$}
\If {there exists a node $l_j$ with a single in-edge $e_1=(l_i, l_j)$ and a single out-edge $e_2=(l_j, l_k)$ }
\State $e' = (l_i, l_k)$
\State $\mathcal{G'} = \mathcal{G} - l_j - e_1 - e_2 + e'$
%\State $t_{\er x}(e', c_i, c_k) = \min_{c_j} \{ t_{\er x}(e_1, c_i, c_j) + t_{\er c}(n_j, c_j) + t_{\er s}(n_j, c_j) + t_{\er x}(e_2, c_j, c_k)\}$
\State {\bf return} $\mathcal{G'}$
\Else
\State {\bf return} $\mathcal{G}$
\EndIf
\EndFunction
\State
\Function{EdgeElimination}{$\mathcal{G}$}
\If {there exist two edges $e_1 = (l_i, l_j)$ and $e_2 = (l_i, l_j)$}
\State $e' = (l_i, l_j)$
\State $\mathcal{G'} = \mathcal{G} - e_1 - e_2 + e'$
%\State $t_{\er x}(e', c_i, c_k) = t_{\er x}(e_1, c_i, c_k) + t_{\er x}(e_2, c_i, c_k)$
\State {\bf return} $\mathcal{G'}$
\Else
\State {\bf return} $\mathcal{G}$
\EndIf
\EndFunction
\State
\end{algorithmic}
\end{algorithm}

\begin{theorem}
\label{thm3}
Assume $\mathcal{G}'$ = NodeElimination($\mathcal{G}$) and $l_j$ is the eliminated layer. If ${\mathcal{S}_o}'$ is an optimal strategy for $\mathcal{G}'$,
then $\mathcal{S}_o = {\mathcal{S}_o}' + \widehat{c_j}$ is an optimal strategy for $\mathcal{G}$, where
\begin{equation}
\label{eqn_ap1}
\begin{split}
\widehat{c_j} = \argmin_{c_j} & \{ t_{\er c}(n_j, c_j) + t_{\er s}(n_j, c_j) \\
& + t_{\er x}(e_1, c_i, c_j) + t_{\er x}(e_2, c_j, c_k)\}
\end{split}
\end{equation}
\end{theorem}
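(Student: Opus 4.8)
The plan is to exhibit a cost-preserving bijection between parallelization strategies for $\mathcal{G}$ and pairs consisting of a strategy for $\mathcal{G}'$ together with a configuration for the eliminated layer $l_j$. Since $\mathcal{G}' = \mathcal{G} - l_j - e_1 - e_2 + e'$ shares all of its nodes with $\mathcal{G}$ except that $l_j$ is absent, any strategy $\mathcal{S}$ for $\mathcal{G}$ restricts to a strategy $\mathcal{S}'$ for $\mathcal{G}'$ by dropping the configuration $c_j$ assigned to $l_j$, and conversely any strategy $\mathcal{S}'$ for $\mathcal{G}'$ paired with any admissible configuration $c_j$ for $l_j$ yields the strategy $\mathcal{S}' + c_j$ for $\mathcal{G}$. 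This correspondence $\mathcal{S} \leftrightarrow (\mathcal{S}', c_j)$ is a bijection because $l_j$ has no occurrence in $\mathcal{G}'$ and every configuration of $l_j$ is allowed.

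The key step is to compare $t_{\er o}(\mathcal{G}, \mathcal{D}, \mathcal{S}' + c_j)$ with $t_{\er o}(\mathcal{G}', \mathcal{D}, \mathcal{S}')$ using the definition in Equation~\ref{eqn1}. The node sums differ only in the term for $l_j$, and the edge sums differ only in that $\mathcal{G}$ contributes $t_{\er x}(e_1, \cdot, \cdot)$ and $t_{\er x}(e_2, \cdot, \cdot)$ whereas $\mathcal{G}'$ contributes the single fresh term $t_{\er x}(e', \cdot, \cdot)$; all remaining terms coincide. Writing $c_i$ and $c_k$ for the configurations of $l_i$ and $l_k$ under $\mathcal{S}'$ (well-defined since both $l_i$ and $l_k$ remain in $\mathcal{G}'$ as the endpoints of $e'$), this gives
\begin{equation*}
\begin{split}
t_{\er o}(\mathcal{G}, \mathcal{D}, \mathcal{S}' + c_j) = t_{\er o}(\mathcal{G}', \mathcal{D}, \mathcal{S}') - t_{\er x}(e', c_i, c_k) & + t_{\er c}(l_j, c_j) + t_{\er s}(l_j, c_j) \\
& + t_{\er x}(e_1, c_i, c_j) + t_{\er x}(e_2, c_j, c_k).
\end{split}
\end{equation*}

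Next I would minimize both sides over $c_j$ with $\mathcal{S}'$ held fixed. On the right-hand side only the last four terms depend on $c_j$, and by Equation~\ref{eqn2} their minimum over $c_j$ is exactly $t_{\er x}(e', c_i, c_k)$, attained at $c_j = \widehat{c_j}$ as in Equation~\ref{eqn_ap1}. The two copies of $t_{\er x}(e', c_i, c_k)$ then cancel, so $\min_{c_j} t_{\er o}(\mathcal{G}, \mathcal{D}, \mathcal{S}' + c_j) = t_{\er o}(\mathcal{G}', \mathcal{D}, \mathcal{S}')$. Taking the minimum over all strategies $\mathcal{S}'$ for $\mathcal{G}'$ and invoking the bijection yields $\min_{\mathcal{S}} t_{\er o}(\mathcal{G}, \mathcal{D}, \mathcal{S}) = \min_{\mathcal{S}'} t_{\er o}(\mathcal{G}', \mathcal{D}, \mathcal{S}') = t_{\er o}(\mathcal{G}', \mathcal{D}, \mathcal{S}_o')$. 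Finally, instantiating $\mathcal{S}' = \mathcal{S}_o'$ and $c_j = \widehat{c_j}$ in the displayed identity shows $t_{\er o}(\mathcal{G}, \mathcal{D}, \mathcal{S}_o' + \widehat{c_j})$ attains this minimum, so $\mathcal{S}_o = \mathcal{S}_o' + \widehat{c_j}$ is an optimal strategy for $\mathcal{G}$.

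I do not expect a substantive obstacle; the only real care is the bookkeeping of which node and edge terms change between $\mathcal{G}$ and $\mathcal{G}'$ — in particular treating $e'$ as a genuinely new edge that contributes one additional $t_{\er x}$ term to the cost of $\mathcal{G}'$ while $e_1$ and $e_2$ (and $l_j$) are its counterparts removed — and verifying that the map $\mathcal{S} \leftrightarrow (\mathcal{S}', c_j)$ is indeed a bijection so that the interchange of minima is valid.
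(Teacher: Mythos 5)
Your proof is correct and follows essentially the same route as the paper's: both hinge on the identity that $t_{\er o}(\mathcal{G},\mathcal{S})$ and $t_{\er o}(\mathcal{G}',\mathcal{S}')$ differ exactly by the $l_j$-related terms minus $t_{\er x}(e',c_i,c_k)$, together with the fact that Equation~\ref{eqn2} defines $t_{\er x}(e',\cdot,\cdot)$ as the minimum of those terms. The paper packages this as a chain of inequalities $t_{\er o}(\mathcal{G},\mathcal{S}_1)\geq t_{\er o}(\mathcal{G}',\mathcal{S}_1)\geq t_{\er o}(\mathcal{G}',{\mathcal{S}_o}')=t_{\er o}(\mathcal{G},\mathcal{S}_o)$ for an arbitrary competitor $\mathcal{S}_1$, whereas you phrase it as an exact identity plus an interchange of minima over the bijection $\mathcal{S}\leftrightarrow(\mathcal{S}',c_j)$ — a purely presentational difference.
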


\begin{proof}
It suffices to prove that $t_{\er o}(\mathcal{G}, \mathcal{S}_1) \geq t_{\er o}(\mathcal{G}, \mathcal{S}_o)$
for any other strategy $\mathcal{S}_1$. We assume layer $l_i$ has parallelization configuration $c_{i1}\in \mathcal{S}_1$.
We claim that
\begin{eqnarray}
t_{\er o}(\mathcal{G}, \mathcal{S}_1) &  \geq t_{\er o}(\mathcal{G'}, \mathcal{S}_1)  \label{eqn_ap2} \\
&  \geq t_{\er o}(\mathcal{G'}, {\mathcal{S}_o}') \label{eqn_ap3}\\
& = t_{\er o}(\mathcal{G}, \mathcal{S}_o) \label{eqn_ap4}
\end{eqnarray}

To prove (\ref{eqn_ap2}), note that the difference between $t_{\er o}(\mathcal{G}, \mathcal{S}_1)$ and  $t_{\er o}(\mathcal{G'}, \mathcal{S}_1)$ is
\begin{equation}
\label{eqn_a5}
\begin{split}
& t_{\er o}(\mathcal{G}, \mathcal{S}_1) - t_{\er o}(\mathcal{G'}, \mathcal{S}_1) \\
= & t_{\er c}(l_j, c_{j1}) + t_{\er s}(l_j, c_{j1}) +  t_{\er x}(e_1, c_{i1}, c_{j1})  \\
& + t_{\er x}(e_2, c_{j1}, c_{k1}) - t_{\er x}(e', c_{i1}, c_{k1}) \\
\end{split}
\end{equation}
because all other layers except $l_j$ use the same configurations in $t_{\er o}(\mathcal{G}, \mathcal{S}_1)$ and $t_{\er o}(\mathcal{G'}, \mathcal{S}_1)$, and
therefore all cost functions unrelated to $l_j$ drop out in the subtraction. The remaining parts are $l_j$, $e_1$, and $e_2$, which no longer exist in $\mathcal{G'}$ after node
elimination, and $e'$ that is added to $\mathcal{G'}$. Recall that $t_{\er x}(e', \cdot, \cdot)$ is defined as follows.
\begin{equation}
\begin{split}
\label{eqn_a6}
t_{\er x}(e', c_i, c_k) = \min_{c_j} & \{ t_{\er c}(l_j, c_j) + t_{\er s}(l_j, c_j) \\
& + t_{\er x}(e_1, c_i, c_j) + t_{\er x}(e_2, c_j, c_k)\}
\end{split}
\end{equation}
Combining (\ref{eqn_a5}) and (\ref{eqn_a6}), we have $t_{\er o}(\mathcal{G}, \mathcal{S}_1) \geq t_{\er o}(\mathcal{G'}, \mathcal{S}_1)$.

To prove (\ref{eqn_ap3}), simply observe that the inequality must hold because ${\mathcal{S}_o}'$ is assumed to be an optimal strategy for $\mathcal{G'}$.

To prove (\ref{eqn_ap4}), the difference between $t_{\er o}(\mathcal{G'}, {\mathcal{S}_o}')$ and $t_{\er o}(\mathcal{G}, \mathcal{S}_o)$ is
\begin{equation}
\label{eqn_ap7}
\begin{split}
& t_{\er o}(\mathcal{G}, \mathcal{S}_o) - t_{\er o}(\mathcal{G'}, {\mathcal{S}_o}') \\
= & t_{\er c}(l_j, \widehat{c_j}) + t_{\er s}(l_j, \widehat{c_j}) +  t_{\er x}(e_1, c_{i}, \widehat{c_j}) \\
&+ t_{\er x}(e_2, \widehat{c_j}, c_{k}) - t_{\er x}(e', c_{i}, c_{k}) \\
\end{split}
\end{equation}
This is because $\mathcal{S}_o = {\mathcal{S}_o}' + \widehat{c_j}$, and therefore all cost functions unrelated to $l_j$ drop out. 
We can prove (\ref{eqn_ap4}) by plugging (\ref{eqn_ap1}) into (\ref{eqn_ap7}).
\end{proof}

\begin{theorem}
\label{thm4}
Assume $\mathcal{G}'$ = EdgeElimination($\mathcal{G}$), and ${\mathcal{S}_o}'$ is an optimal strategy for $\mathcal{G}'$,
then $\mathcal{S}_o={\mathcal{S}_o}'$ is an optimal strategy for $\mathcal{G}$.
\end{theorem}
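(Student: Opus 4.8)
The plan is to mirror the structure of the proof of Theorem~\ref{thm3}, but with a much simpler middle step since edge elimination does not remove any layer. First I would fix an arbitrary parallelization strategy $\mathcal{S}_1$ for $\mathcal{G}$ and show the chain $t_{\er o}(\mathcal{G}, \mathcal{S}_1) = t_{\er o}(\mathcal{G'}, \mathcal{S}_1) \geq t_{\er o}(\mathcal{G'}, {\mathcal{S}_o}') = t_{\er o}(\mathcal{G}, {\mathcal{S}_o})$, which establishes optimality of $\mathcal{S}_o = {\mathcal{S}_o}'$ for $\mathcal{G}$. Note that since edge elimination neither adds nor removes nodes, the set of layers of $\mathcal{G}$ and $\mathcal{G'}$ coincide, so a strategy for one is literally a strategy for the other, which is why $\mathcal{S}_o = {\mathcal{S}_o}'$ makes sense verbatim.

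The key step is the first equality. For any strategy $\mathcal{S}$ assigning configurations $c_i$ and $c_j$ to the common source $l_i$ and destination $l_j$, the only difference between $t_{\er o}(\mathcal{G}, \mathcal{S})$ and $t_{\er o}(\mathcal{G'}, \mathcal{S})$ comes from the edge terms: $\mathcal{G}$ contains $e_1 = (l_i, l_j)$ and $e_2 = (l_i, l_j)$ whereas $\mathcal{G'}$ contains only $e' = (l_i, l_j)$; all layer cost terms $t_{\er c}$ and $t_{\er s}$ and all other edge terms are identical and cancel. Hence
\begin{equation}
\label{eqn_ap8}
\begin{split}
t_{\er o}(\mathcal{G}, \mathcal{S}) - t_{\er o}(\mathcal{G'}, \mathcal{S}) = \ & t_{\er x}(e_1, c_i, c_j) + t_{\er x}(e_2, c_i, c_j) \\
& - t_{\er x}(e', c_i, c_j),
\end{split}
\end{equation}
and by the definition of $t_{\er x}(e', \cdot, \cdot)$ in Equation~\ref{eqn3} the right-hand side is exactly zero. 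Applying this with $\mathcal{S} = \mathcal{S}_1$ gives the first equality and with $\mathcal{S} = {\mathcal{S}_o}'$ gives the last one; the middle inequality is immediate from the assumed optimality of ${\mathcal{S}_o}'$ for $\mathcal{G'}$.

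There is no real obstacle here — the claim reduces to the one-line observation that Equation~\ref{eqn3} is an exact identity rather than an inequality, unlike the $\min$ in Equation~\ref{eqn2}. The only point requiring a little care is making the cancellation argument rigorous: I would state explicitly that because $\mathcal{G}$ and $\mathcal{G'}$ have the same node set and the same edge set outside $\{e_1, e_2, e'\}$, and because $\mathcal{S}$ assigns the same configuration to every shared layer, every summand of Equation~\ref{eqn1} except the three involving $e_1$, $e_2$, $e'$ appears identically in both $t_{\er o}(\mathcal{G}, \mathcal{S})$ and $t_{\er o}(\mathcal{G'}, \mathcal{S})$ and therefore drops out of the difference. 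Everything else is bookkeeping.
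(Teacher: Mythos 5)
Your proposal is correct and follows essentially the same argument as the paper: the chain $t_{\er o}(\mathcal{G}, \mathcal{S}_1) = t_{\er o}(\mathcal{G'}, \mathcal{S}_1) \geq t_{\er o}(\mathcal{G'}, {\mathcal{S}_o}') = t_{\er o}(\mathcal{G}, \mathcal{S}_o)$, with the first and last steps reduced to the exact cancellation given by Equation~\ref{eqn3}. Your explicit remark that the node sets of $\mathcal{G}$ and $\mathcal{G'}$ coincide (so ${\mathcal{S}_o}'$ is literally a strategy for $\mathcal{G}$) is a small clarification the paper leaves implicit, but the substance is the same.
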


\begin{proof}
The proof is the same sequence of steps for Theorem~\ref{thm3}, but the justification of each step is different.

To prove (\ref{eqn_ap2}) for Theorem~\ref{thm4}, the difference between $t_{\er o}(\mathcal{G}, \mathcal{S}_1)$ and  $t_{\er o}(\mathcal{G'}, \mathcal{S}_1)$ is
\begin{equation}
\label{eqn_ap8}
\begin{split}
& t_{\er o}(\mathcal{G}, \mathcal{S}_1) - t_{\er o}(\mathcal{G'}, \mathcal{S}_1) \\
= &  t_{\er x}(e_1, c_{i1}, c_{j1}) + t_{\er x}(e_2, c_{i1}, c_{j1}) - t_{\er x}(e', c_{i1}, c_{j1})
\end{split}
\end{equation}
Recall that $t_{\er x}(e', \cdot, \cdot)$ is defined as follows:
\begin{equation}
\label{eqn_ap9}
t_{\er x}(e', c_i, c_j) = t_{\er x}(e_1, c_i, c_j) + t_{\er x}(e_2, c_i, c_j)
\end{equation}
Combining (\ref{eqn_ap8}) and (\ref{eqn_ap9}), we have $t_{\er o}(\mathcal{G}, \mathcal{S}_1) = t_{\er o}(\mathcal{G'}, \mathcal{S}_1)$.

For (\ref{eqn_ap3}), the inequality holds because ${\mathcal{S}_o}'$ is an optimal strategy for $\mathcal{G'}$.

For (\ref{eqn_ap4}), the difference between $t_{\er o}(\mathcal{G'}, {\mathcal{S}_o}')$ and $t_{\er o}(\mathcal{G}, \mathcal{S}_o)$ is
\begin{equation}
\label{eqn_ap10}
\begin{split}
& t_{\er o}(\mathcal{G}, \mathcal{S}_o) - t_{\er o}(\mathcal{G'}, {\mathcal{S}_o}') \\
= & t_{\er x}(e_1, c_{i}, c_j) + t_{\er x}(e_2, c_i, c_j) - t_{\er x}(e', c_{i}, c_j) \\
= & 0
\end{split}
\end{equation}

\end{proof}

\end{document}